%% file: main.tex
\documentclass[11pt]{article}

\usepackage[margin=1in]{geometry}
\usepackage{microtype}
\usepackage{mathtools}
\usepackage{amssymb}
\usepackage{amsthm}
\usepackage{mleftright}
\usepackage[dvipsnames]{xcolor}
\usepackage{graphicx}
\usepackage[round]{natbib}
\usepackage{tikz}
\usepackage{pgfplots}
\usepackage[colorlinks=true,allcolors=MidnightBlue]{hyperref}
\usepgfplotslibrary{groupplots}
\pgfplotsset{compat=1.18}

\title{Effects of sparsity and superposition on loss in simple autoencoders}
\author{%
Mriganka Basu Roy Chowdhury\thanks{Equal contribution.}\\
Department of Statistics, UC Berkeley\\
\texttt{mbrc12@gmail.com}
\and
Eric McLaughlin Weiner\footnotemark[1]\\
Department of Materials Science, UC Berkeley\\
\texttt{eric\_weiner@berkeley.edu}
}
\date{}

\theoremstyle{plain}
\newtheorem{theorem}{Theorem}[section]
\newtheorem{lemma}[theorem]{Lemma}

\theoremstyle{definition}

\theoremstyle{remark}
\newtheorem{remark}[theorem]{Remark}

\def\les{\lesssim}
\def\gtr{\gtrsim}
\def\om{{\odot m}}
\def\al{\alpha}

\def\op{\mathrm{op}}
\newcommand{\df}{\coloneq}

\newcommand{\N}{\mathbb{N}}
\newcommand{\R}{\mathbb{R}}
\newcommand{\enc}{\mathsf{enc}}
\newcommand{\dec}{\mathsf{dec}}
\newcommand{\cL}{\mathcal{L}}
\def\cF{\mathcal{F}}
\def\L{\widetilde{\cL}}
\newcommand{\norm}[1]{\mleft\|#1\mright\|}
\newcommand{\E}{\mathbb{E}}

\newcommand{\Cov}{\mathrm{Cov}}
\newcommand{\Var}{\mathrm{Var}}
\renewcommand{\d}{\mathrm{d}}
\renewcommand{\Box}[1]{\mleft[ #1 \mright]}
\newcommand{\Rnd}[1]{\mleft( #1 \mright)}
\newcommand{\Ber}{\mathrm{Bernoulli}}
\def\Om{\Omega}
\newcommand{\T}[1]{\Theta\Rnd{#1}}

\newcommand{\lam}{\lambda}
\renewcommand{\L}{\tilde{\cL}}
\newcommand{\f}[2]{\frac{#1}{#2}}
\newcommand{\inn}[1]{\mleft\langle #1 \mright\rangle}

\newcommand{\tr}{\mathop{\mathrm{tr}}}
\newcommand{\rk}{\mathop{\mathrm{rank}}}
\newcommand{\Abs}[1]{\mleft| #1 \mright|}
\def\Sig{\Sigma}
\def\leq{\leqslant}
\def\geq{\geqslant}

\def\1{\mathbf{1}}

\begin{document}

\maketitle
\iffalse
\texttt{
One of the major difficulties in the mechanistic interpretability of neural networks is the occurrence of polysemanticity, which suggests that each neuron is typically responsible for multiple different tasks, impeding a clean interpretation of their function. The seminal paper of Elhage et al. (2022) [1] argues that this occurs due to superposition, a phenomenon where the neural network represents distinct features as non-orthogonal directions in a lower-dimensional space, a strategy that allows much greater compression of the data without sacrificing fidelity due to the feature sparsity of input vectors. [1] empirically validates these hypotheses in a rather natural and simple autoencoder with sparse inputs. The contribution of the present work is to analyze the mathematical basis for the occurrence and optimality of superposition, while rigorously corroborating some their findings. In particular, we provide upper and lower bounds for the L^2 reconstruction loss, tight in the very sparse regime, for power activation functions. A short list of interesting open problems is also included at the end.

}
\fi

\begin{abstract}%
One of the major difficulties in the mechanistic interpretability of neural networks is the
occurrence of \textit{polysemanticity}, which suggests that each neuron is typically responsible for
multiple different tasks, impeding a clean interpretation of their function. The seminal paper of
\cite{toy} argues that this occurs due to \textit{superposition}, a phenomenon where the neural
network represents distinct features as non-orthogonal directions in a lower-dimensional space, a
strategy that allows much greater compression of the data without sacrificing fidelity due to the
feature sparsity of input vectors. \cite{toy} empirically validates these hypotheses in a rather
natural and simple autoencoder with sparse inputs. The contribution of the present work is to
analyze the mathematical basis for the occurrence and optimality of superposition, while rigorously
corroborating some of their findings. In particular, we provide upper and lower bounds for the
$L^2$ reconstruction loss, tight in the very sparse regime, for power activation functions.
A short list of interesting open problems are also included at the end.
\end{abstract}

\section{Introduction}

Advances in the capabilities and impacts of large-scale neural networks have necessitated a deeper
understanding of their internal mechanisms, with the hope that these insights will lead to more
reliable and customizable models. The field of \textit{mechanistic interpretability} (see, for instance, \cite{mechint}) seeks to
uncover the detailed inner workings of trained neural networks, and although nascent, a vast array of
empirical findings have been reported. However, since neural networks learn directly from data,
without any bias towards human-interpretable features, it is often difficult to cleanly separate and
ascribe functionality to individual neurons or layers. A key challenge in this regard is the
presence of \textit{polysemantic} neurons, which respond to multiple different features, thereby
obscuring their role. The recent work of \cite{toy} proposes that this phenomenon arises due to the
neural network attempting to pack more features into a limited number of neurons, a phenomenon they
refer to as \textit{superposition}. A hypothesis for the effectiveness of this strategy is that
inputs are typically \textit{feature-sparse}, that is, each input typically exhibits only a few of
the many possible learned features. Consequently, this ``overloading'' does not significantly impact
fidelity. Various aspects of this phenomenon have been studied extensively, for instance, recovering
superposed features in \cite{bricken2023towards,sharkey2022taking}, or understanding their occurrence in large language
models in \cite{cunningham2023sparse}. The literature also follows a long line of research
on sparse coding, starting with \cite{olshausen1996emergence}, which seeks to decompose a signal into a sparse
combination of basis elements; for connections to autoencoders see \cite{rangamani2018sparse}. 
We refer the reader to the references in \cite{toy} (also \cite{super2, bricken2023towards}) for a more
comprehensive overview of prior work.

\cite{toy} proposes a simple one-layer autoencoder (with tied weights) to study the effects of
sparsity and superposition. The core purpose of the current paper is to attempt to rigorously understand
this simple model, and to provide theoretical guarantees on just how beneficial is superposition
in the presence of sparsity, and how nonlinearity aids the same. Without any further ado, we now
describe the model, which is almost identical to that of \cite{toy}, with some minor modifications
to aid analysis. We note that throughout this paper, we consider the ``equal-importance'' case as
the results are cleanest here; we strongly believe our techniques extend to the general case as
well.

\section{Model and results}

\begin{figure}[h]
    \centering
    \begin{tikzpicture}[scale=0.5]
        \draw[thick] (0,0) rectangle (3, 10) node[pos=0.5] {$x \in \R^n$};
        \draw[thick] (3, 10) -- (8, 7);
        \draw[thick] (3, 0) -- (8, 3);
        \draw (5.5, 5) node {$x \mapsto Wx$};
        \draw[thick] (8, 3) rectangle (11, 7) node[pos=0.5] {$y \in \R^d$};
        \draw[thick] (11, 7) -- (16, 10);
        \draw[thick] (11, 3) -- (16, 0);
        \draw (13.5, 5) node {$y \mapsto \phi(W^T y)$};
        \draw[thick] (16, 0) rectangle (19, 10) node[pos=0.5] {$x' \in \R^n$};
    \end{tikzpicture}
    \caption{The one-layer autoencoder we consider. $n$ is the input dimension
    and $d$ is the hidden dimension. $\phi$ is an activation function acting
    pointwise. The \textit{reconstruction} is $x' = x'(x)$.}
    \label{fig:model}
\end{figure}

Following \cite{toy}, we consider a simplified one-layer autoencoder, see Figure
\ref{fig:model}:
\begin{itemize}
    \item Input samples are $x \in \R^n$ drawn from a distribution defined below.
    \item The (tied-weights) encoder and decoder are chosen to be of the form
        \begin{align}
            \label{eq:enc} \enc(x) &= Wx, \quad x \in \R^n\\
            \label{eq:dec} \dec(y) &= \phi(W^T y), \quad y \in \R^d,
        \end{align}
        where $W \in \R^{d \times n}$ is a trainable weight matrix and $\phi : \R \to \R$ 
        is an activation function acting pointwise. This defines the reconstruction
        of $x$ as $x'(x) = \dec(\enc(x)) = \phi(W^T W x)$, on which we impose the \textit{squared-$L^2$ loss}
        $\E\norm{x - x'(x)}_2^2$.
    \item The activation function is chosen to be a power function
        \begin{align}
            \phi(t) = t^m, \quad m \geq 1 \text{ odd integer.} \nonumber
        \end{align}
    \item The inputs are $p$-sparse vectors, generated as follows:
        \begin{align}
             \label{eq:rogh}
             x &= (\xi_1 b_1, \xi_2 b_2, \ldots, \xi_n b_n) \in \R^n, \\
             \xi_1, \xi_2, \ldots, \xi_n& \sim \mu,\; \text{i.i.d., and}\\
             b_1, b_2, \ldots, b_n & \sim \Ber(p),\; \text{i.i.d.,}
        \end{align}
        where $\mu$ is a symmetric mean-zero distribution on $\R$ with all finite moments.
\end{itemize}

\begin{remark}
     \label{rmk:eivu}
    Note that we do not include a bias term in our model, which is aligned with our choice
    of the distribution $\mu$ being symmetric and mean-zero, as well as $\phi$ being an \textit{odd
    power}.
\end{remark}

Under the assumptions above we can write down the
population loss\footnote{We think of the number of samples as large enough to
allow an approximation of the sample loss by the population loss.} as
\begin{align}
    \label{eq:lossrel}
    \cL(W) &= \E \norm{\phi(W^T W x) - x}_2^2 \nonumber \\
           &= 
           \E\norm{x}_2^2 - \Rnd{2 \E\inn{x, \phi(W^T W x)} - \E\norm{\phi(W^T W
    x)}_2^2} \nonumber \\
           &= p n \mu_2 - \L(W),
\end{align}
where we define
\begin{align}
    \label{eq:loss}
    \L(W) &\df 2 \cdot \E\inn{x, \phi(W^T W x)} - \E\norm{\phi(W^T W x)}_2^2,
\end{align}
and the moments
\begin{align}
    \mu_k \df \int \xi^k \mu(\d \xi), \quad k = 1, 2, \ldots.
\end{align}
Minimizing $\cL(W)$ is equivalent to maximizing $\L(W)$; we choose this version
to eliminate the part of $\cL$ that we cannot control. 

The key hypothesis of \cite{toy} is that the nonlinearity $\phi$ allows the autoencoder to
\textit{superpose} features, which translates to a $W$ with non-orthogonal columns. To contrast our
results and to form a baseline for comparison, let us quickly inspect the case when $W$ is indeed
unsuperposed, that is $A \df W^T W$ is diagonal (we will use this notation repeatedly in the rest of
this article). Note that since $\rk A \leq d$, at most $d$ of the diagonal entries can be nonzero. In this case we have
\begin{align}
     \label{eq:zqvn}
     \L(W) &= 2 \cdot \E\inn{x, \phi(A x)} - \E\norm{\phi(A x)}_2^2 \nonumber \\
           &= \sum_{i=1}^n \Rnd{2 \cdot \E[x_i \phi(A_{ii} x_i)] - \E[\phi(A_{ii} x_i)^2]} \nonumber \\
           &= p \cdot \sum_{i=1}^n \Rnd{2\cdot \mu_{m+1}\cdot A_{ii}^m - \mu_{2m} \cdot A_{ii}^{2m}} \nonumber \\
           &= O(pd),
\end{align}
by noting that $2c_1 t^m - c_2 t^{2m} \leq \f{c_1^2}{c_2} = O(1)$ for all $t$, where the $O(\cdot)$
notation hides constants depending only on $m$ and $\mu$. Further, since the optimization problems
are independent across $i$, this upper bounded is indeed attainable, i.e.,
\begin{align}
     \label{eq:cgot}
     \sup_{\text{unsuperposed}\; W} \L(W) = \T{pd}.
\end{align}

It is also illuminating to consider the \textit{linear case}, where $\phi(x) = x$. This
choice simplifies \eqref{eq:loss} to
\begin{align*}
    \L(W) &= 2 \cdot \E\inn{x, W^T W x} - \E\norm{W^T W x}_2^2 \\
          &= p \mu_2 \cdot \tr(2A - A^2).
\end{align*}
Since $\rk A = \rk W \leq d$, one may
observe that
\[
    \tr(2A - A^2) = \sum_{i=1}^n (2\lam_i - \lam_i^2) \leq d,
\]
regardless of the choice of eigenvalues. Therefore,
\textbf{in both the linear case and the unsuperposed case, the optimal loss is of order $pd$.}
With this baseline in mind, we now state our main result.

\begin{theorem}
    \label{thm:main}
    There are some constants $C_1, C_2, C_3, d_0$ depending only on $m$ and $\mu$ such that
    the following holds for all $d > d_0$ and $n > d^m$:
    \begin{enumerate}
        \item We have the bounds \[
            C_1 \min(pd^m, dp^{1/m}) \leq \sup_W \L(W) \leq C_2 pd^m.
        \]
        \item If in addition for some $K > 1$ we have that $\mu$ is $(K^{-1}, K)$-strongly log-concave, that is, it has density $\propto e^{-v(x)}$ where $v''(x) \in [K^{-1}, K]$ for all $x$, then
        \[
            \sup_W \L(W) \leq C_3 d,
        \]
        so that in combination with the above we have
        \[
            \min(pd^m, dp^{1/m}) \les \sup_W \L(W) \les \min(pd^m, d).
        \]
    \end{enumerate}
    See Figure \ref{fig:schematic} for a schematic illustration, and Figure \ref{fig:experimental} for 
    numerical simulations.
\end{theorem}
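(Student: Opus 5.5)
Write $A = W^TW$, which is PSD with rank at most $d$, and write $\phi(Ax)_i = (\sum_j A_{ij} x_j)^m$. The plan has three parts: the general upper bound, the lower bound by an explicit construction, and the log-concave upper bound.

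\textbf{Upper bound $C_2 pd^m$.} Expand $\L(W) = \sum_i \Rnd{2\E[x_i (Ax)_i^m] - \E[(Ax)_i^{2m}]}$ and condition on $b_i$.
\begin{itemize}
\item If $b_i = 0$, the $i$-th summand is at most $0$, since the cross term vanishes.
\item If $b_i = 1$, write $(Ax)_i = A_{ii}\xi_i + Z_i$, where $Z_i$ is independent of $\xi_i$ and symmetric.
\end{itemize}
Set $f(a) = 2\E[\xi(a\xi+Z)^m] - \E[(a\xi+Z)^{2m}]$. Expanding binomially, odd moments vanish. Bounding by Cauchy--Schwarz/AM-GM, $2\E[\xi (a\xi+Z)^m] \le 2\sqrt{\mu_2}\,\E[(a\xi+Z)^{2m}]^{1/2}$, so $f \le \mu_2 = O(1)$ uniformly in $A$. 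Hence $\L(W) \le p\sum_i \sup f \cdot \mathbf 1[\text{row } i \text{ nonzero}] = O(pn)$. This is too weak, so the goal is to show that only $O(d^m)$ coordinates can contribute meaningfully.

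The key quantity is $\E[\xi_i(Ax)_i^m] = \sum$ over monomials. Using $\E[x_i \phi((Ax)_i)] \le p\,\E[|\xi|\,|(Ax)_i|^m]$, the total is at most $\sum_i p\,c\,\E[(Ax)_i^{2m}]^{1/2}$, and optimizing $2\alpha t - t^2$ coordinatewise bounds the $i$-th term by roughly $p \cdot (\text{signal}_i)^2/\text{noise}_i$. I would bound the signal part by $A_{ii}^m$ and use $\E[(Ax)_i^{2m}] \ge \E[(A_{ii}\xi_i)^{2m}] = p\mu_{2m}A_{ii}^{2m}$. This gives $\L \les p\sum_i \min(1, A_{ii}^{m}\cdot(\cdot))$ plus cross terms. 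The clean route: each summand is at most $\E[x_i(Ax)_i^m]^2/\E[(Ax)_i^{2m}]$. Expand $\E[x_i(Ax)_i^m]$ as $\sum_{j_1..j_m}A_{ij_1}\cdots A_{ij_m}\E[x_ix_{j_1}\cdots x_{j_m}]$; only pairings survive. A tensor argument then bounds $\sum_i$ by $\|A^{\otimes m}\|$-type traces: the sum is controlled by $\tr$ of the $m$-th Hadamard power $A^{\circ m}$ restricted appropriately, and $\rk(A^{\circ m}) \le \binom{d+m-1}{m} = O(d^m)$. I expect this to be the main obstacle, namely reducing $\sum_i(\text{signal}_i)^2/\text{noise}_i$ to a quantity like $p\,\tr(2B - B^2)$ with $B$ of rank $O(d^m)$, mimicking the linear computation. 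The first thing to try is to view $\E[x_i\phi(Ax)_i]$ as $\inn{e_i, \text{linear map on } x^{\otimes m}}$ and apply the linear-case rank argument in the lifted space $\mathrm{Sym}^m(\R^d)$, using that $\phi(W^TWx) = (W^T)^{\otimes m}$-lift applied to $(Wx)^{\otimes m}$, which factors through a $\binom{d+m-1}{m}$-dimensional space. The map $x \mapsto x'(x)$ is then a linear map of rank $O(d^m)$ applied to features of $x$, and a projection-type bound $2\E\inn{x,Ly} - \E\|Ly\|^2 \le \E\inn{x, P x}$-like gives at most $p\mu_2\cdot \rk$, up to moment constants.

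\textbf{Lower bound.} For the construction, pick $N = \min(n, \lfloor c\,d^m\rfloor$ or $c d/p^{1-1/m}$-ish$)$ unit vectors $w_1,\dots,w_N \in \R^d$ with $|\inn{w_i,w_j}| \le \epsilon$, e.g.\ random vectors or a code construction with $\epsilon \approx (\log N/d)^{1/2}$. Take $W = \lambda[w_1\cdots w_N\,0\cdots 0]$ with $\lambda = O(1)$. Then $(Ax)_i = \lambda^2(\xi_i + \sum_{j\ne i}\inn{w_i,w_j}x_j)$, and the interference has $2m$-th moment about $p N\epsilon^2$ at leading order (pairings). Each active coordinate contributes $p(2\mu_{m+1}\lambda^{2m} - \lambda^{4m}\E(\xi+\text{noise})^{2m})$, which is $\gtr p$ when the interference variance $pN\epsilon^2 \les 1$ and its higher pairings are $\les 1$. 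Because $\phi$ is an odd power, interference enters only at order $m$. A more careful choice is to take only $1/\lambda$-scaled small weights and use that errors of size $\eta$ cost $\eta^{2m}$. Balancing $N$ against $p$ and $d$ gives $N \sim \min(d^m, d p^{1/m-1})$, hence $\L \gtr pN = \min(pd^m, dp^{1/m})$; the $d^m$ cap comes from needing near-orthogonality with few simultaneous collisions. I would verify via a moment expansion, keeping $\epsilon \sim d^{-1/2}$.

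\textbf{Log-concave bound $C_3 d$.} Here I would use the bottleneck: $x'$ is a function of $y = Wx \in \R^d$. Then $\L(W) \le \E\|x\|^2 - \E\|x - \E[x\mid y]\|^2 = \E\|\E[x\mid y]\|^2$. With strong log-concavity, conditional on the support pattern $b$, $x$ restricted to the support is strongly log-concave, and the posterior given a $d$-dimensional linear observation retains conditional variance in the $k-d$ unobserved directions of order $K^{-1}$ each (Brascamp--Lieb/strong log-concavity of conditionals). So $\E\|\E[x\mid y,b]\|^2 \le \E\|x\|^2 - c(\|b\|_1 - d) \le O(d)$ after averaging. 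The remaining step is checking that conditioning on $b$ only helps, which holds since $\E[x\mid y]$ has MSE at least that of $\E[x\mid y,b]$.
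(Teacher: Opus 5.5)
\textbf{Upper bound $C_2pd^m$.} Once you commit to the lifted regression, your route matches the paper's. Maximize $2\E\inn{x,Cy}-\E\norm{Cy}_2^2$ over all $C$ with $y=(Wx)^{\otimes m}$. This gives $\sum_{i\le r}\norm{\E[x\psi_i(x)]}_2^2$ for orthonormal $\psi_i$ spanning the coordinates of $y$, with $r\le d^m$. Each term is at most $p\mu_2$ by Cauchy--Schwarz. The coordinatewise detour can be dropped.

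\textbf{Log-concave $O(d)$ bound: this is a genuine gap.} Your reduction $\L(W)\le\E\norm{\E[x\mid Wx,b]}_2^2$ is fine, but a per-direction lower bound on conditional variance cannot give $O(d)$. On a support $S$ you have $\E\norm{x_S}_2^2=\mu_2\Abs{S}$. The condition $v''\le K$ only guarantees conditional variance at least $K^{-1}$ in each of the $\Abs{S}-d$ unobserved directions; that is Cram\'er--Rao, since Brascamp--Lieb gives upper, not lower, variance bounds. Your estimate is therefore $(\mu_2-K^{-1})\Abs{S}+d/K$. This averages to order $pn$, not $d$, unless $\mu_2=K^{-1}$, i.e.\ $\mu$ is the Gaussian with $v''\equiv K$.

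What you need is that the part of the unobserved coordinates explained by the $d$ observed ones has total variance $O(d)$, and that requires regularity of the regression function. The paper rotates so that $y$ is the first $d$ coordinates, sets $h(y)=\E[z\mid y]$, and shows $h$ is $K$-Lipschitz:
\begin{itemize}
\item differentiating the conditional Gibbs measure gives $\partial_{y_j}h(y)_i=-\Cov(z_i,\partial_{y_j}V\mid y)$;
\item Cauchy--Schwarz, Poincar\'e for $\inn{u,z}$, and Brascamp--Lieb for $\inn{v,\nabla_yV}$ then bound the Jacobian by $K$, using the Schur-complement bound $H_{zy}^TH_{zz}^{-1}H_{zy}\preceq H_{yy}\preceq K I$ with $H=\nabla^2V$;
\item hence $2\E\norm{h(y)}^2=\E\norm{h(y)-h(y')}^2\le K^2\E\norm{y-y'}^2=O(d)$.
\end{itemize}

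\textbf{Lower bound.} Your final choice $N\sim\min(d^m,dp^{1/m-1})$ is right, but your bookkeeping does not produce it. You charge only active coordinates. The conditions you state (interference variance $pN\epsilon^2\les 1$ plus bounded higher pairings) would already permit $N$ of order $d/p$ and hence $\L\gtr d$, which is the matching bound the paper leaves open.

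The omitted cost comes from inactive coordinates. Each of the $N$ used coordinates with $b_i=0$ still outputs $(\sum_{j\ne i}A_{ij}x_j)^m$, costing $\E(\sum_{j\ne i}A_{ij}x_j)^{2m}$. By Rosenthal this is of order $\max(p\sum_{j\ne i}\Abs{A_{ij}}^{2m},(p\sum_{j\ne i}A_{ij}^2)^m)$. Requiring the sum over the $N$ rows to be $\les pN$ is precisely what forces $N\les d^m$ and $(pN/d)^m\les p$. The paper's ratio $\Rnd{\E\inn{x,(M^{(k)}x)^{\odot m}}}^2/\E\norm{M^{(k)}x}_{2m}^{2m}$ builds this in, because the denominator runs over all rows regardless of activity.

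\textbf{Coherence.} Separately, random vectors give coherence about $\sqrt{\log N/d}$, not $d^{-1/2}$. A worst-case-coherence argument with that $\epsilon$ loses polylogarithmic factors. The paper obtains $d^m$ vectors with coherence $C_m d^{-1/2}$ from Weil's character-sum bound. Alternatively, since only the row sums $\sum_j\Abs{M_{ij}}^{2m}$ and $\sum_j M_{ij}^2$ enter, random unit vectors would also suffice. You would bound those sums in expectation, using $\E\inn{w_i,w_j}^{2m}\les d^{-m}$, and then fix a good realization.
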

\begin{remark}
    \label{rmk:dbetter}
    We quickly remark that the proof of the $O(d)$ upper bound indicated above is significantly more general than that in the statement. This proof, which is presented via Theorem \ref{thm:d}, \textit{does not require i.i.d. coordinates}, and instead it suffices for the data $x$ to satisfy $x = (\xi_1 b_1, \ldots, \xi_n b_n)$ where $(\xi_1, \ldots, \xi_n) \sim \nu$ for some strongly log-concave $\nu$ with density $\propto e^{-V(x)}$ with 
    \[
    K^{-1} \cdot I_n    \preceq \nabla^2 V(x) \preceq K \cdot I_n,
    \]
    and $(b_1, \ldots, b_n)$ is sampled independently from an arbitrary distribution on sparsity patterns $\in \{0, 1\}^n$.
\end{remark}

\begin{figure}[h]
    \centering
    \begin{tikzpicture}
        \pgfmathsetmacro{\dval}{2}          % fixed d for the schematic
        \pgfmathsetmacro{\doth}{\dval^0.33}
        \pgfmathsetmacro{\dm}{\dval^3}
        \pgfmathsetmacro{\pstar}{1/\dm}    % this equals d^{-m} in the schematic
        \pgfmathsetmacro{\poth}{\dval/\dm}

        \begin{axis}[
            width=11cm, height=6.5cm,
            axis lines=left,
            axis line style={black, thick},
            xmin=0, xmax=1.1,
            ymin=0, ymax=\dval*1.08,
            xtick={0,\pstar,\poth,1},
            xticklabels={$0$,$d^{-m}$,$d^{1 - m}$,$1$},
            ytick={0,\doth,\dval},
            yticklabels={$0$,$d^{1/m}$,$d$},
            tick style={black, thick},
            clip=false,
            xmajorgrids=false, ymajorgrids=false,
            ]

            % Upper bound: min(p d^m, d)
            \addplot[very thick, black, domain=0:1, samples=400]
            {min(x*\dm, \dval)};

            % Lower bound: min(p d^m, d p^{1/m})
            \addplot[very thick, black, dashed, domain=0:1, samples=400]
            {min(x*\dm, \dval*pow(x,1/3))};

            % Also draw pd
            \addplot[very thick, gray, dotted, domain=0:1, samples=400]
            {x*\dval};

            \draw[] (0, \doth) -- (\poth, \doth);
            \draw[] (\pstar, 0) -- (\pstar, \pstar*\dm);
            \draw[] (\poth, 0) -- (\poth, \dval);

            \filldraw[black] (\poth, \doth) circle(2pt);
            \filldraw[black] (\pstar, \pstar*\dm) circle(2pt);
            \filldraw[black] (\poth, \dval) circle(2pt);

            \node at (1.15, 0) { $p$};
            \node at (0, 2.4) {$\sup_W \L(W)$};

        \end{axis}
    \end{tikzpicture}
    \caption{Illustration of our results in the case when all the coordinates are i.i.d.
    and satisfy the strong log-concavity assumption. The solid line is the upper bound
    $\min(p d^m, d)$, and the dashed line is the lower bound $\min(p d^m, d p^{1/m})$.
    The gray dotted line is the performance
    of unsuperposed (orthogonal columns in $W$) solutions, i.e., $pd$. 
    Note that this diagram is purely schematic, since our results hide constants.}
    \label{fig:schematic}
\end{figure}
\begin{proof}
    The proof of this theorem occupies Sections \ref{sec:upper} and \ref{sec:lower} below.
\end{proof}

\begin{figure}
    \centering
    \input{full_loss.tex}
    \caption{Experimental loss curves for $d = 10, 15$ with $m = 3$ and $n = 6000$. We plot numerically optimized values of $\log_d(\L)$ v.s. $\log_d(p)$. Note that at small $p$ the data suggests a linear relationship, as proved by the tight upper and lower bounds above in this case. For larger $p$, the situation remains ambiguous. The ``slope-$1/3$'' lines are drawn such that they satisfy the observed data at $p = d^{-m} = d^{-3}.$}
    \label{fig:experimental}
\end{figure}

\begin{remark}
    A word on the notation. Throughout this paper, $m$ and $\mu$ will be fixed, and all asymptotic
    notations $\T{\cdot}, O(\cdot), \Om(\cdot)$ will hide constants depending only on $m$ and $\mu$.
    To simplify inequalities, $\les$ and $\gtr$ notations are also adopted. We remind the reader
    that $a \les b$ is equivalent to $a = O(b)$, and $a \gtr b$ is equivalent to $a = \Om(b)$.
\end{remark}

\section{Upper bounds}
\label{sec:upper}

\subsection{The $O(pd^m)$ bound}
This section will be devoted to one part of the upper bound in Theorem
\ref{thm:main}, restated here for convenience.
\begin{theorem}
    \label{thm:pdm}
    Under the usual assumptions
    \ref{thm:main}
    \[
        \sup_W \L(W) = O(pd^m),
    \]
    where $\L(W)$ is defined in \eqref{eq:loss}.
\end{theorem}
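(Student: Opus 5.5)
The plan is to bound $\L(W)$ coordinate by coordinate and then use a rank argument on a Hadamard power of $A = W^T W$; the $d^m$ should come from $\rk(A^{\circ m}) \leq \binom{d+m-1}{m} \les d^m$. Write $y_i = \phi((Ax)_i)$. Since $2a - b \leq a^2/b$ for $b > 0$, we have
\[
\L(W) = \sum_{i=1}^n \Rnd{2\E[x_i y_i] - \E[y_i^2]} \leq \sum_{i=1}^n \f{(\E[x_i y_i])^2}{\E[y_i^2]} .
\]
Split $(Ax)_i = A_{ii} x_i + z_i$ with $z_i = \sum_{j \neq i} A_{ij} x_j$, which is independent of $x_i$ and symmetric. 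Only the event $b_i = 1$ contributes to $\E[x_i y_i]$, so $\E[x_i y_i] = p\,\E[\xi(A_{ii}\xi + z_i)^m]$ and $\E[y_i^2] \geq p\,\E[(A_{ii}\xi + z_i)^{2m}]$. Hence the $i$-th term is at most $p\,R_i$, where $R_i$ is the ratio $\E[\xi(A_{ii}\xi+z_i)^m]^2 / \E[(A_{ii}\xi+z_i)^{2m}]$. The goal is then $\sum_i R_i \les d^m$.

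For the denominator I would use the fact that for sums of independent symmetric variables every mixed term in the expansion of an even moment is nonnegative. This gives $\E[(A_{ii}\xi + z_i)^{2m}] \geq \mu_{2m}A_{ii}^{2m} + \E[z_i^{2m}]$, and the same trick lower bounds $\E[z_i^{2m}]$ by sums of the form $\sum_{j\neq i}\E[x_j^{2m}] A_{ij}^{2m}$. For the numerator I would expand binomially, which gives $\sum_{k \text{ odd}} \binom{m}{k} A_{ii}^k \mu_{k+1} \E[z_i^{m-k}]$. Note that $m-k$ is even, so only even moments of $z_i$ appear.

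The leading term $k = m$ contributes $\mu_{m+1}A_{ii}^m$. Squared and divided by the denominator, it is at most
\[
\f{A_{ii}^{2m}}{A_{ii}^{2m} + c\sum_{j\neq i}A_{ij}^{2m}} \les \f{(A^{\circ m})_{ii}^2}{\sum_j (A^{\circ m})_{ij}^2}.
\]
Now $B = A^{\circ m}$ is PSD, as a Hadamard power of a PSD matrix, and has rank at most $\binom{d+m-1}{m}$. For a PSD $B$ of rank $r$ we have $\sum_i B_{ii}^2/\sum_j B_{ij}^2 = \sum_i B_{ii}^2/(B^2)_{ii} \leq r$. This follows by Cauchy--Schwarz on $B_{ii} = \inn{u_i, P u_i}$, where $P$ is the projection onto the range of $B$ and $B = U\Lambda U^T$, or equivalently via $\tr(B (B^2)^{+} B) = \rk B$ applied to the diagonal. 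This yields $\sum_i R_i \les d^m$ for the leading parts.

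The main obstacle is the lower-order terms $k < m$, i.e.\ $A_{ii}^k\E[z_i^{m-k}]$. Their danger is that the sparse $x_j$ make $\E[z_i^{2m}]$ carry a factor $p$ while $\E[z_i^{m-k}]$ carries comparable factors, so the denominator comparison is not immediate. My first attempt would be Hölder/Lyapunov in the form $\E[z_i^{m-k}] \leq \E[z_i^{2m}]^{(m-k)/2m}$. This gives each cross term squared over the denominator at most $(A_{ii}^{2m}/(A_{ii}^{2m} + \E z_i^{2m}))^{k/m} \leq 1$ times constants. If that alone does not sum to $O(d^m)$, I would instead go back to the unconditional second moment. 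The plan there is to bound $\E[y_i^2]$ from below using the full moment $\E[((Ax)_i)^{2m}]$ rather than conditioning on $b_i = 1$, so that the extra factors of $p$ in $\E[z_i^{m-k}]$ (each even moment of a $p$-sparse sum is $\les$ sums of $p^{\#\text{distinct indices}}$ times products of $A_{ij}$'s) match the factors in the denominator. I would then redo the rank bound with the matrix $\E$-weighted Hadamard products. Reducing every cross term to something dominated by the same $\sum_i B_{ii}^2/(B^2)_{ii}$ quantity is the step I expect to require the most care.
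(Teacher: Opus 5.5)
Your proposal has a genuine gap: the reduction to $\sum_i R_i\les d^m$ is false, and the idea needed to replace it is missing.

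\textbf{The failing step.} Your opening inequality $\L(W)\leq\sum_i(\E[x_iy_i])^2/\E[y_i^2]$ is fine; its right-hand side is in fact $O(pd^m)$, as explained below. The next step is where the argument breaks. Bounding $\E[y_i^2]$ below by $p\,\E[(A_{ii}\xi+z_i)^{2m}]$ discards the term $(1-p)\E[z_i^{2m}]$. That is the interference present on $\{b_i=0\}$, where there is no signal.

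The error is already hidden in your leading-term step. The constant $c$ in $A_{ii}^{2m}+c\sum_{j\neq i}A_{ij}^{2m}$ comes from $\E[x_j^{2m}]=p\mu_{2m}$, so $c\asymp p$ rather than a constant. Your comparison with $B_{ii}^2/(B^2)_{ii}$ therefore loses a factor $1/p$.

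\textbf{A counterexample.} Let $1/p\in\N$, $n=d/p$, and $W=(I_d\;I_d\;\cdots\;I_d)$. Then $A_{ij}=1$ if $i\equiv j\pmod d$ and $A_{ij}=0$ otherwise.

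For $m=1$ there are no lower-order terms at all. Here $R_i=\mu_2/(1+p(1/p-1))=\mu_2/(2-p)$, hence $\sum_iR_i\geq\mu_2d/(2p)\gg d$. By contrast, the true coordinate term is $(\E[x_iy_i])^2/\E[y_i^2]=p^2\mu_2$.

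For general odd $m$, $z_i$ is a sum of $1/p-1$ copies of $\xi b$. Rosenthal's inequality gives $\E[z_i^{2m}]\les1$, while the numerator of $R_i$ is at least $\mu_{m+1}$. Thus $R_i\gtr1$ and $\sum_iR_i\gtr d/p$. This exceeds $d^m$ by an unbounded factor once $p\ll d^{1-m}$, and then $n>d^m$ as the theorem requires. So $p\sum_iR_i$ is not $O(pd^m)$, and no treatment of the cross terms can rescue this reduction.

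\textbf{What your fallback does and does not fix.} Keeping the unconditional denominator does repair the leading term. With $B=A^{\circ m}$ you have $\E[y_i^2]\geq p\mu_{2m}(B^2)_{ii}$. The leading contributions then sum to at most $p\,(\mu_{m+1}^2/\mu_{2m})\sum_iB_{ii}^2/(B^2)_{ii}\les pd^m$ by your (correct) rank inequality.

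The cross terms $k<m$ would then need a global inequality of the same kind, and the proposal supplies none. Bounding them coordinate by coordinate, as your H\"older step does, gives $O(p)$ per coordinate. Cauchy--Schwarz, $(\E[x_iy_i])^2\leq p\mu_2\E[y_i^2]$, already gives that for free, and it only sums to $O(pn)$.

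\textbf{The missing idea.} Let $w_i$ be the $i$-th column of $W$. Every output $y_i=\inn{w_i^{\otimes m},Y}$ is a linear function of the single vector $Y=(Wx)^{\otimes m}\in\R^{d^m}$. With $\Sig=\E[YY^T]$, Cauchy--Schwarz in the $\Sig$-inner product gives
\[
(\E[x_iy_i])^2/\E[y_i^2]\leq\E[x_iY]^T\Sig^\dagger\E[x_iY].
\]
Summing over $i$ gives the paper's bound $\tr(M\Sig^\dagger M^T)=\sum_{k\leq r}\norm{\E[x\psi_k(x)]}_2^2$, with $r\leq d^m$ and orthonormal $\psi_k$. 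Each term is at most $\sup_{\norm{a}_2=1}\E\inn{a,x}^2=p\mu_2$.

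This handles leading and cross terms at once, with no moment expansion. The factor $p$ enters through the second moment of $x$, not through a comparison of $2m$-th moments.
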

\begin{proof}
    Recall that
    \[
        \L(W) = 2 \cdot \E\inn{x, (W^T W x)^\om} - \E\norm{(W^T W x)^\om}_2^2,
    \]
    where $v^\om$ represents the pointwise $m$-th power of a vector $v$.
    A rather fruitful observation is that for any vector $u, v$,
    \[
        \inn{u, v}^m = \inn{u^{\otimes m}, v^{\otimes m}},
    \]
    where $u^{\otimes m}$ is the $m$-fold tensor product%
    \footnote{
        If $u \in \R^d$, then $u^{\otimes m} \in \R^{d^m}$ is defined by
        \[
            u^{\otimes m}_{i_1, \ldots, i_m} = u_{i_1} u_{i_2} \cdots u_{i_m},
        \]
        for all \textit{$m$-tuples} $(i_1, \ldots, i_m) \in \{1, \ldots,
        d\}^m$ (note
        that there are $d^m$ such tuples, which are identified with $\{1,
        \ldots, d^m\}$.).
    }
    of $u$ with itself. Thus, we can rewrite the above as
    \[
        \L(W) = 2 \cdot \E\inn{x, C_W y} - \E\norm{C_W y}_2^2,
    \]
    where $y = (W x)^{\otimes m} \in \R^{d^m}$ and $C_W \in \R^{n \times d^m}$ is the matrix
    defined via setting the rows to be
    \[
        (C_W)_i = (W^T_i)^{\otimes m}, \quad i = 1, \ldots, n.
    \]
    Now, fix $W$, and let us maximize over all choices of $C \in \R^{n \times
    d^m}$, yielding
    \def\M{\mathcal{M}}
    \begin{align*}
        \L(W) &\leq \M(W) \df \sup_{C} \Rnd{2 \cdot \E\inn{x, C y} -
        \E\norm{C y}_2^2}.
    \end{align*}
    One may explicitly compute a closed form for the right-hand side, given its
    linear nature, as achieved in Lemma \ref{lem:linearproblem} to obtain
    \[
        \M(W) = \tr\Rnd{M \Sig^\dagger M^T}, \quad \Sig = \E[y y^T], \quad M = \E[x y^T],
    \]
    where $\Sig^\dagger$ is the \textit{Moore-Penrose pseudoinverse} of $\Sig$. 
    Suppose $r \df \rk \Sig \leq d^m$, and diagonalize it as
    \[
        \Sig = \sum_{i=1}^r \lam_i u_i u_i^T,
    \]
    where $\lam_1, \ldots, \lam_r > 0$ are the nonzero eigenvalues and $u_1,
    \ldots, u_r$ are the corresponding orthonormal eigenvectors. Plugging this
    back into the expression for $\M(W)$ yields
    \begin{align}
         \label{eq:epzm}
         \M(W) &= \sum_{i=1}^r \f{1}{\lam_i} \cdot \norm{M u_i}_2^2 = \sum_{i =
         1}^r \norm{\E\Box{x \cdot \f{\inn{y, u_i}}{\sqrt{\lam_i}}}}_2^2.
    \end{align}
    Note that since $y = (W x)^{\otimes m}$, each entry of $y$, and therefore
    $\inn{y, u_i}$, are degree-$m$
    \textit{homogeneous} polynomials in the entries of $x$. Further, since $m$ is odd, 
    $\E \inn{y, u_i} = 0$ with variance
    \begin{align*}
         \E\Rnd{\frac{\inn{y, u_i}}{\sqrt{\lam_i}}}^2 &= \f{1}{\lam_i} \cdot
         \E[u_i^T y y^T u_i] \nonumber \\
                                                      &= \f{1}{\lam_i} \cdot
                                                      u_i^T \Sig u_i  = 1 \nonumber
    \end{align*}
    Therefore, in view of \eqref{eq:epzm}, since $r \leq d^m$, it will suffice to
    show that for any degree-$m$ homogeneous polynomial $\psi(x)$ with $\E \psi(x) = 0$
    and $\E \psi(x)^2 = 1$, we have
    \begin{align}
         \label{eq:t2}
        \norm{\E[x \psi(x)]}_2^2 = O(p).
    \end{align}
    Choose any $a \in \R^n$ with
    $\norm{a}_2 = 1$. Then,
    \begin{align}
        \Rnd{\E[\inn{a, x} \psi(x)]}^2 &\leq \E[\inn{a, x}^2] \cdot \E[\psi(x)^2] \nonumber \\
                                 &= p \mu_2,
    \end{align}
    due to 
    \[
        \E[\inn{a, x}^2] = \sum_{i=1}^n a_i^2 \cdot \E[x_i^2] = p \mu_2
    \]
    and $\E[\psi(x)^2] = 1$. Since $\sup_{\norm{a}_2 = 1} \E[\inn{a, x}
    \psi(x)] = \norm{\E[x \psi(x)]}_2$, this finishes the proof.
\end{proof}

\subsection{The global $O(d)$ bound}
In this section we prove the $O(d)$ upper bound under conditions more general than that in the statement of Theorem \ref{thm:main}, see Remark \ref{rmk:dbetter}.
\begin{theorem}
    \label{thm:d}
    Following Remark \ref{rmk:dbetter}, consider the following alternate data distribution: $x = (\xi_1 b_1, \ldots, \xi_n b_n)$ where $(\xi_1, \ldots, \xi_n) \sim \nu$ for some $(K^{-1}, K)$-strongly log-concave $\nu$, i.e., $\nu$ has density $\propto e^{-V(x)}$ with
    \begin{align}
        \label{eq:slc}
        K^{-1} \cdot I_n \preceq \nabla^2 V(x) \preceq K \cdot I_n,
    \end{align}
    and $(b_1, \ldots, b_n)$ is drawn independently from some arbitrary distribution on sparsity patterns $\in \{0, 1\}^n$.
    Then, defining $\L(W)$ analogously to \eqref{eq:loss}, we have
    \[
        \sup_W \L(W) = O(d).
    \]
    It is clear that this version implies the one claimed in Theorem \ref{thm:main} with $V(x) = \sum_{i = 1}^n v(x_i)$.
\end{theorem}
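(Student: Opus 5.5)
The idea is to replace the specific decoder by the best possible one, which turns $\L(W)$ into a conditional-expectation quantity, and then bound that quantity using Brascamp--Lieb and Poincaré inequalities for strongly log-concave measures. I assume throughout that $\nu$ is centered, $\E\xi = 0$, as in the symmetric i.i.d.\ setting of Theorem \ref{thm:main}. Without this the mean direction alone could make $\E\norm{x}_2^2$ of order $n$, so the hypothesis is genuinely needed.

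\textbf{Step 1: reduce to a conditional expectation.} The reconstruction $\phi(W^TWx)$ depends on $x$ only through $z \df Wx \in \R^d$, so it has the form $g(z)$ for some measurable $g:\R^d\to\R^n$. Maximizing pointwise over all such $g$ gives
\[
\L(W) \leq \sup_g \Rnd{2\E\inn{x,g(z)} - \E\norm{g(z)}_2^2} = \E\norm{\E[x\mid z]}_2^2 ,
\]
which is the nonlinear analogue of Lemma \ref{lem:linearproblem}.

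\textbf{Step 2: condition on the sparsity pattern.} Conditioning on the finer $\sigma$-algebra generated by $(z,b)$ can only increase the second moment (Jensen), so
\[
\E\norm{\E[x\mid z]}_2^2 \leq \E_b\, \E\norm{\E[x\mid z,b]}_2^2 .
\]
For fixed $b$, write $D_b = \mathrm{diag}(b)$ and $B \df WD_b$. Then $x = D_b\xi$, $z = B\xi$ with $\rk B\leq d$, and $\E[x\mid z,b] = D_b\,\E[\xi\mid B\xi]$. Since $D_b$ is a coordinate projection, it suffices to prove, uniformly over linear maps $B$ of rank $r\leq d$,
\[
\E\norm{\E[\xi\mid B\xi]}_2^2 = O(d).
\]

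\textbf{Step 3: split along the row space of $B$.} Let $P$ be the orthogonal projection onto the row space of $B$ and $Q = I-P$. Then $B\xi$ determines $u \df P\xi$ and conversely, so
\[
\E[\xi\mid B\xi] = u + \mathsf m(u), \qquad \mathsf m(u) \df \E[Q\xi\mid P\xi=u].
\]
For the first term, Brascamp--Lieb gives $\Cov(\xi)\preceq K I_n$, hence $\E\norm{u}_2^2 = \tr(P\Cov(\xi))\leq rK\leq Kd$.

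For the second term, the conditional law of $w = Q\xi$ given $u$ has density $\propto e^{-V(u+w)}$ on $\mathrm{range}(Q)$. It is again $(K^{-1},K)$-strongly log-concave, so $\Cov(w\mid u)\preceq KI$.

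\textbf{Step 4 (main obstacle): Lipschitz control of $\mathsf m$ and Poincaré.} Differentiating under the integral in a direction $h\in\mathrm{range}(P)$ gives
\[
\inn{a, D\mathsf m(u)h} = -\Cov_u\Rnd{\inn{a,w},\,\inn{\nabla V(u+w),h}} .
\]
I bound this by Cauchy--Schwarz and then Brascamp--Lieb applied to each factor:
\begin{itemize}
\item $\Var_u\inn{a,w}\leq K\norm{a}_2^2$;
\item $\Var_u\inn{\nabla V(u+w),h}\leq K\,\E_u\norm{Q\nabla^2V\,h}_2^2\leq K^3\norm{h}_2^2$.
\end{itemize}
Together these give $\norm{D\mathsf m(u)}_{\op}\leq K^2$, so $\norm{D\mathsf m}_F^2\leq rK^4$ because $D\mathsf m$ has rank at most $r$.

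Next, the law of $u = P\xi$ is a marginal of a $K^{-1}$-strongly log-concave measure, hence itself $K^{-1}$-strongly log-concave, and it satisfies a Poincaré inequality with constant $K$. Since $\E\,\mathsf m(u) = \E Q\xi = 0$, this yields
\[
\E\norm{\mathsf m(u)}_2^2\leq K\,\E\norm{D\mathsf m(u)}_F^2\leq K^5 d .
\]

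Combining Steps 1--4 gives $\L(W)\leq 2(Kd + K^5 d) = O(d)$, uniformly in $W$ and in the distribution of $b$. The delicate points are justifying differentiation under the integral, which I will handle using the smoothness and growth of $V$, and checking that marginals and fibres keep the strong log-concavity constants. Both are standard consequences of Prékopa--Leindler and Brascamp--Lieb.
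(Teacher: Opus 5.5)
Your proposal is correct and follows essentially the paper's argument. Like the paper, you reduce via the optimal decoder to $\E\norm{\E[x\mid Px]}_2^2$, split this into the projected part plus the conditional mean $\mathsf m$ of the orthogonal complement, and bound $\mathsf m$'s Lipschitz constant using the Gibbs-derivative covariance identity with Cauchy--Schwarz and Poincar\'e/Brascamp--Lieb. The differences only affect constants: you handle sparsity by Jensen on $(z,b)$ rather than restricting to $S$; plain Poincar\'e on the second variance gives Lipschitz constant $K^2$ instead of the paper's $K$ from Lemma \ref{lem:blockinv}; and you finish with Poincar\'e for the law of $u$ plus $\norm{D\mathsf m}_F^2\leq r\norm{D\mathsf m}_{\op}^2$ where the paper uses an independent copy. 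Your explicit assumption $\E\xi=0$ is exactly what the paper uses implicitly when it appeals to symmetry of $\nu$.
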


\begin{proof}
    Conditioning on the collection of surviving coordinates $S = \{1 \leq i \leq
    n: b_i = 1\}$, 
    and replacing
    $n$ by $|S|$, it may be
    quickly seen (see footnote\footnote{Two key facts are used here: (i) a marginal of a strongly
    log-concave measure is also strongly log-concave with the same constants and (ii) if $x$ is only supported on a subset of coordinates $S$ and $x_i = 0$ elsewhere, then $\inn{x, \phi(W^T W x)} - \norm{\phi(W^T W x)}^{2} \leq \inn{x_S, \phi(W_S^T W_S x_S)} - \norm{\phi(W_S^T W_S x_S)}^{2}$ where $x_S$ is $x$ restricted to the coordinates in $S$ and $W_S \in \R^{d \times |S|}$ is $W$ restricted to the columns with indices in $S$.
    })
        that it suffices to prove the theorem in the \textit{absence of sparsity} i.e., when
    $x \sim \nu$ directly. We assume this throughout the remainder of the
    proof.

    Fix $W \in \R^{d \times n}$, and construct%
    \footnote{
    This can be constructed, for instance, by
    using the SVD $W = U^T \Sigma V$ where $U \in \R^{d \times d}$ and $V \in
    \R^{n \times n}$ are orthogonal matrices, and $\Sigma \in \R^{d \times n}$ is
    diagonal with nonnegative entries, such that $\Sigma_{ij} = 0$ for all $j > d$. Then, set
    $P$ to be the first $d$ rows of $V$.
    }
    $P \in \R^{d  \times n}$ as an orthogonal projection so that $W = W'P$,
    where $W' \in \R^{d \times d}$. To control the effects of the nonlinearity,
    it will be prudent to condition on the projected information, i.e., $Px$.
    To that end, let us rewrite $\L(W)$ via the law of total expectation:
    \begin{align}
        \label{eq:sth}
        \L(W) &= 2 \cdot \E\inn{x, \phi(W^T W x)} - \E\norm{\phi(W^T W
        x)}_2^2 \nonumber \\
              &= \E\Box{\E\Box{2 \cdot \inn{x, \phi(W^T W x)} -
                      \norm{\phi(W^T W x)}_2^2 \Big| Px}} \nonumber \\
              &= \E\Box{2 \cdot \inn{\E[x | Px], \phi(W^T W x)} -
                      \norm{\phi(W^T W x)}_2^2},
    \end{align}
    since $Wx$, and thus $\phi(W^T W x)$, is measurable with respect to $Px$.
    For any vectors $u, v \in \R^n$, we have $2\inn{u, v} - \norm{v}_2^2 \leq
    \norm{u}_2^2$, and therefore,
    \[
        \L(W) \leq \E\norm{\E[x | Px]}_2^2,
    \]
    which yields
    \begin{align}
         \label{eq:dmain}
        \sup_W \L(W) \leq \sup_P \E\norm{\E[x | Px]}_2^2,
    \end{align}
    where the supremum is over all orthogonal projections $P$ of rank $d$. The
    remainder of the proof is devoted to bounding the right-hand side for any
    such $P$. Since a rotation of coordinates will preserve the strong log-concavity condition
    in \eqref{eq:slc}, we
    may assume without loss of generality that $P$ is the projection onto the
    first $d$ coordinates, and thus, all we need to do is to bound
    \[
        \E\norm{\E[x | x_1, \ldots, x_d]}_2^2 = \E\Box{\sum_{i=1}^d x_i^2} +
        \E\Box{\norm{\E[z | y]}^2},
    \]
    where $y \df (x_1, \ldots, x_d)$ and $z \df (x_{d+1}, \ldots, x_n)$, and
    $(y, z)$ is jointly strongly log-concave with the same constants as
    in \eqref{eq:slc}.
    The first term is already $O(d)$, so we focus on the second
    term.

    \renewcommand{\dj}[1]{\f{\partial{#1}}{{\partial y_j}}}

    Denote $m(y) \df \E[z | y]$. It suffices to show that $\E\norm{m(y)}^2 = O(d)$.
    Observe that since
    \[
        m(y) = Z(y)^{-1} \int z e^{-V(y, z)} \d z, \quad Z(y) \df \int
        e^{-V(y, z)} \d z,
    \]
    by Lemma \ref{lem:dergibbs} we have for any $i = 1, \ldots, {n - d}$ and
    $j = 1, \ldots, d$ the identity
    \[
        \dj{m(y)_i} = -\Cov\Rnd{z_i, \dj{V(y, z)}\; \Big|\; y}.
    \]
    Fix any two unit vectors $u \in \R^{n - d}$ and $v \in \R^d$, and consider
    $f(t) \df \inn{u, m(y + tv)}$. Then,
    \begin{align*}
        f'(0) &= \sum_{i = 1}^{n - d} u_i \frac{\d m(y +
        tv)_i}{\d t}\Big|_{t = 0} \\
              &= -\Cov\Rnd{\inn{u, z}, \inn{v, \nabla_y V(y, z)}\; \Big|\; y }.
    \end{align*}
    Therefore by Cauchy-Schwarz,
    \[
        |f'(0)|^2 \leq \Var\Rnd{\inn{u, z} | y} \cdot
        \Var\Rnd{\inn{v, \nabla_y V(y, z)} | y}.
    \]
    Even given $y$, $\inn{u, z}$ is a strongly log-concave random variable with
    constant $K^{-1}$, so by the \textit{Poincar\'e inequality}\footnote{
    For a differentiable function $f$ and a strongly log-concave measure $\propto e^{-V(x)}$
    with $\nabla^2 V(x) \succeq K^{-1} I$, the Poincar\'e inequality states that
    \[
        \Var(f(x)) \leq K \cdot \E\norm{\nabla f(x)}^2.
    \]
    See the discussion in \cite{chewi2023log} following Theorem 2.2.9.
    }
    for strongly
    log-concave measures, $\Var\Rnd{\inn{u, z} | y} \leq K$. On the other hand,
    invoking a stronger version of Poincar\'e inequality, known as the
    \textit{Brascamp-Lieb inequality} (\cite{brascamp1976extensions}, see also 
    \cite[Theorem 2.2.9]{chewi2023log} and the subsequent discussion there)
    on $g(z) = \inn{v, \nabla_y V(y, z)}$ yields
    %\red{check dimensions}
    \begin{align*}
        \Var\Rnd{\inn{v, \nabla_y V(y, z)} | y} &\leq \E\Box{\nabla_z g(z)^T \
        \Rnd{\nabla_z^2 V(y, z)}^{-1} \nabla_z g(z) \Big| y} \\
                                                &= \E\Box{\inn{v,
                                                        M_{zy}^TM_{zz}^{-1}M_{zy}
                                                    v} \Big| y} \\
                                                &\leq
                                                \norm{M_{zy}^TM_{zz}^{-1}M_{zy}}_{\op}
                                                \\
                                                &\overset{(a)}{\leq} \norm{M}_{\op} \leq K,
    \end{align*}
    where
    \begin{align}
         \label{eq:fasr}
        M = \nabla^2 V(y, z) = \begin{pmatrix}
            M_{yy} & M_{zy}^T \\
            M_{zy} & M_{zz}
        \end{pmatrix},
    \end{align}
    and step $(a)$ is via Lemma \ref{lem:blockinv}. All this allows us
    to conclude that $|f'(0)|^2 \leq K^2$, and since $u$ and $v$ were arbitrary unit
    vectors, we have that $m(y)$ is $K$-Lipschitz in $y$. Then, if $y'$ is an
    independent copy of $y$,
    \begin{align*}
        \E\norm{m(y) - m(y')}^2 &\leq K^2 \E\norm{y - y'}^2\\
                                &= 2 K^2 \E\norm{y}^2 \\
                                &= 2 K^3 d
    \end{align*}
    using the Poincar\'e inequality again to derive
    $\E[y_i^2] \leq K$. Finally, a
    simple computation shows that
    \[
        \E\norm{m(y) - m(y')}^2 = 2 \cdot \E\norm{m(y)}^2
    \]
    since $\E[m(y)] = \E[z] = 0$ by symmetry of $\nu$.
\end{proof}

\section{Lower bounds}
\label{sec:lower}

Our proof of the lower bound will depend upon the construction of a
particular type of matrix exhibiting  essentially optimal
closeness (as implied by the celebrated \textit{Welch bound}, see \cite{welch1974lower})
to a large identity matrix, despite being of much lower rank. This is described in the 
following lemma. See Remark \ref{rmk:heuristic} below for a heuristic explanation of why such
a matrix is useful.
\begin{lemma}
    \label{lem:const}
    For all sufficiently large $d$ (depending on $m$), there is a matrix $M \in \R^{d^m
    \times d^m}$ of the form $M = V^T V$ with $V \in \R^{d \times d^m}$ (so that
    $\rk M \leq d$)
    satisfying
    \begin{align}
         \label{eq:ltab}
         M_{ii} &= 1, \quad \forall i \nonumber, \\
         \Abs{M_{ij}} &\leq C_m d^{-1/2} < 1, \quad \forall i \neq j,
    \end{align}
    where $C_m$ depends only on $m$.
\end{lemma}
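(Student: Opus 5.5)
We need $N = d^m$ unit vectors in $\R^d$ whose pairwise inner products are at most $C_m d^{-1/2}$ in absolute value. The columns of $V$ will be these vectors, so that $M = V^T V$ is their Gram matrix. Two routes are natural. One is random: i.i.d.\ uniform vectors on the sphere combined with a union bound. The other is explicit: vectors built from low-degree polynomials over a finite field. I would take the random route, since it gives the $d^{-1/2}$ rate with only a $\sqrt{\log}$ loss. I would then remove the log by the algebraic construction, because the statement asks for $C_m d^{-1/2}$ with no logarithm.

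\textbf{Random attempt.} Let $v_1,\ldots,v_N$ be i.i.d.\ uniform on $S^{d-1}$. For a fixed pair, concentration on the sphere gives
\[
\Pr\bigl(|\inn{v_i,v_j}|>t\bigr)\leq 2e^{-dt^2/2}.
\]
A union bound over $N^2=d^{2m}$ pairs succeeds once $t\approx\sqrt{4m\log d/d}$. This yields $|M_{ij}|\les \sqrt{m\log d}\,d^{-1/2}$, which is off by $\sqrt{\log d}$. The union bound itself cannot be sharpened, because with $d^{2m}$ pairs some inner product genuinely reaches order $\sqrt{\log d/d}$. Hence the need for an explicit code.

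\textbf{Explicit construction (main plan).} Take a prime $q$ with $d/2< q\leq d$, which exists by Bertrand's postulate. Work in $\R^q$ and pad with zeros to reach $\R^d$. Let $\omega=e^{2\pi i/q}$. For each polynomial $f$ over $\mathbb{F}_q$ of degree at most $m+1$ with zero constant term, define
\[
w_f=q^{-1/2}\bigl(\omega^{f(x)}\bigr)_{x\in\mathbb{F}_q}\in\mathbb{C}^q .
\]
There are $q^{m+1}$ such polynomials. For $f\neq g$, the difference $f-g$ is a nonconstant polynomial of degree at most $m+1<q$, and the Weil bound gives
\[
|\inn{w_f,w_g}|=q^{-1}\Bigl|\sum_x\omega^{(f-g)(x)}\Bigr|\leq m\,q^{-1/2}.
\]
This is the main obstacle. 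I would invoke the Weil bound as a black box; for degree $1$ or $2$ the exact Gauss sum evaluation gives it directly.

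\textbf{Passing to the real setting.} Map each $w_f$ to the real unit vector $(\mathrm{Re}\,w_f,\mathrm{Im}\,w_f)\in\R^{2q}$. The real inner product becomes $\mathrm{Re}\inn{w_f,w_g}$, so the bound is preserved. Since $2q\leq 2d$, I would carry out the construction with $d' = \lfloor d/2\rfloor$ and $q\leq d'$. This changes only the constant: for large $d$ we still have $q^{m+1}\geq (d/4)^{m+1}\geq d^m$, so there are enough vectors. Choose any $d^m$ of them as the columns of $V$.

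\textbf{Conclusion.} Then $M_{ii}=1$, and for $i \neq j$
\[
|M_{ij}|\leq m\,q^{-1/2}\leq m\sqrt{4/d}=C_m d^{-1/2},
\]
with $C_m = 2m$. This is $<1$ once $d>4m^2$. Finally $\rk M\leq d$ because $V$ has $d$ rows.
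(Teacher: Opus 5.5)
Your proposal is correct and is essentially the paper's construction. In both, the columns of $V$ are normalized phase vectors $x\mapsto\psi(P(x))$ indexed by polynomials over a finite field, the off-diagonal Gram entries are character sums of nonzero bounded-degree polynomials, the Weil bound gives $O_m(d^{-1/2})$, and zero rows are padded for general $d$. The differences are only in implementation: the paper works over $\mathbb{F}_{2^r}$ with the real character $(-1)^{\tr(\cdot)}$, so no realification is needed, but it must restrict to odd-degree monomials to avoid the Artin--Schreier exception in characteristic $2$ and must double $m$ to get enough columns when $d$ is not a power of two. You instead use a Bertrand prime $q$ with the complex character $e^{2\pi i a/q}$; there $\deg(f-g)<q$ lets Weil apply with no parity restriction, at the cost of halving the usable dimension when splitting into real and imaginary parts.
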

Direct randomized constructions will yield $\les \sqrt{\log d / d}$ in \eqref{eq:ltab}, therefore a
stronger argument is necessary to achieve Lemma \ref{lem:const}. A suitable construction is provided in the
Appendix as Lemma \ref{lem:consttwo}, invoking some strong tools from finite field theory.

Armed with this lemma, we now proceed to prove the lower bound in Theorem \ref{thm:main}, 
encapsulated in the
following theorem.

\begin{theorem}
    \label{thm:lower}
    Under the usual assumptions we have
    \[
        \sup_W \L(W) \gtr \min(pd^m, dp^{1/m}).
    \]
\end{theorem}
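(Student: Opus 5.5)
The plan is an explicit construction. Take the matrix $V\in\R^{d\times d^m}$ from Lemma \ref{lem:const}, with unit columns $v_1,\dots,v_{d^m}$ satisfying $|\inn{v_i,v_j}|\le C_m d^{-1/2}$ for $i\ne j$. Choose an integer $N\le d^m$ (which is $\le n$, since $n>d^m$), to be fixed later, and a small constant scale $s>0$. Set
\[
W = s\,[v_1,\dots,v_N,0,\dots,0]\in\R^{d\times n}.
\]
Then $A=W^TW$ satisfies $A_{ii}=s^2$ for $i\le N$, $|A_{ij}|\le s^2C_md^{-1/2}$ for $i\ne j\le N$, and $A$ vanishes on the remaining coordinates.

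For $i\le N$ write $(Ax)_i = s^2(x_i+\varepsilon_i)$ with $\varepsilon_i=\sum_{j\le N,\,j\ne i}M_{ij}x_j$. The noise $\varepsilon_i$ is independent of $x_i$ and symmetric. Since coordinates $i>N$ contribute zero, $\L(W)$ is a sum over $i\le N$ of
\[
2s^{2m}\,\E[x_i(x_i+\varepsilon_i)^m]-s^{4m}\,\E[(x_i+\varepsilon_i)^{2m}].
\]
\begin{itemize}
\item \emph{Signal term.} Expanding binomially, only terms with $x_i^{k+1}$ and $\varepsilon_i^{m-k}$ both of even degree survive, and all of these are nonnegative. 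Hence the signal term is at least $2s^{2m}p\mu_{m+1}$.
\item \emph{Penalty term.} By $(a+b)^{2m}\les a^{2m}+b^{2m}$ and a Rosenthal-type moment bound for sums of independent symmetric variables,
\[
\E\varepsilon_i^{2m}\les \sum_j |M_{ij}|^{2m}p\mu_{2m}+\Big(\sum_j M_{ij}^2p\mu_2\Big)^m \les pNd^{-m}+(pN/d)^m .
\]
Since $N\le d^m$, the first term is at most $p$. So the penalty is at most $C s^{4m}\big(p+(pN/d)^m\big)$.
\end{itemize}

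Now split into two cases according to $p$.

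\emph{Case $p\le d^{-m}$.} Take $N=d^m$. Then $(pN/d)^m=(pd^{m-1})^m\le p$, because this inequality is equivalent to $p^{m-1}\le d^{-m(m-1)}$. Each coordinate then contributes at least $2s^{2m}p\mu_{m+1}-Cs^{4m}p$. Choosing $s$ a small constant makes this $\gtr p$, and summing over $N=d^m$ coordinates gives $\L(W)\gtr pd^m$.

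\emph{Case $p>d^{-m}$.} Take $N=\lfloor d\,p^{1/m-1}\rfloor$, so that $(pN/d)^m\approx p$. This $N$ lies in $[d,d^m]$, precisely because $d^{-m}\le p\le 1$. Each coordinate again contributes $\gtr p$, and the total is $\gtr pN\approx dp^{1/m}$. Together the two cases give $\gtr\min(pd^m,dp^{1/m})$.

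The main obstacle is the moment control of $\varepsilon_i$. The Rosenthal-type inequality has to be applied carefully to the sparse variables $x_j=\xi_jb_j$, using that $\E|x_j|^k=p\mu_k$. The constants must depend only on $m$ and $\mu$, so that a single small $s$ works uniformly in $d$ and $p$. This is also the step where the $d^{-1/2}$ coherence of Lemma \ref{lem:const} is essential: a $\sqrt{\log d/d}$ coherence would lose logarithmic factors in $(pN/d)^m$.
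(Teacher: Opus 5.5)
Your proposal is correct and follows the paper's proof essentially step for step. It uses the same construction (the rescaled top-left $N\times N$ block of the Gram matrix from Lemma~\ref{lem:const}, padded with zeros), the same nonnegativity argument for the signal term, the same Rosenthal bound of order $p+(pN/d)^m$ per coordinate for the penalty, and the same choices $N=d^m$ or $N\approx d\,p^{1/m-1}$ in the two regimes; the only cosmetic difference is that you fix a small constant scale $s$, whereas the paper optimizes the scale $t$ to get the ratio (signal)$^2$/(penalty), and both give the same bound up to constants.
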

\begin{proof}
    \def\Mk{M^{(k)}}
    \def\Mdm{M^{(d^m)}}
    For some $k \leq d^m$ to be optimized over later, choose $A = W^T W$ to be a matrix of the form
    \[
        A = \begin{pmatrix}
            t\Mk & 0 \\
            0 & 0
        \end{pmatrix} \in \R^{n \times n},
    \]
    where $\Mk$ is the top-left $k \times k$ block of the matrix $M$ in Lemma \ref{lem:const} (so
    that $\Mdm = M$), and $t > 0$ is a parameter to be chosen soon. With this choice (and $W$ chosen
    so that $W^T W = A$), we have
    \begin{align}
         \label{eq:ohrv}
         \L(W) = 2t^m \cdot \E\inn{x, (\Mk x)^{\odot m}} - t^{2m} \cdot \E\norm{(\Mk x)}_{2m}^{2m}.
    \end{align}
    This can be optimized over $t$ to yield
    \begin{align}
        \sup_W \L(W) &\geq \sup_{k \leq d^m} \f{\Rnd{\E\inn{x, (\Mk x)^{\odot m}}}^2}
                                             {\Rnd{\E\norm{(\Mk x)}_{2m}^{2m}}}. \nonumber \\
    \end{align}
    The numerator is
    \begin{align}
         \E\inn{x, (\Mk x)^{\odot m}} &= \sum_{i=1}^k \E\Box{x_i \cdot \Rnd{\sum_{j = 1}^k M_{ij} x_j}^m} \nonumber \\
                                      &= \sum_{i=1}^k \sum_{\al : |\al| = m} \binom{m}{\al}
                                     \prod_{j = 1}^k M_{ij}^{\al_j} \cdot 
                                      \E\Box{x_i^{\al_i + 1} \prod_{j \neq i} x_j^{\al_j}} \nonumber
    \end{align}
    where we use the multinomial expansion with the terms indexed by multi-indices $\al \in \N^k$
    ($\N = \{0, 1, \ldots \}$) with $|\al| \df \sum_{j=1}^k \al_j = m$, and $\binom{m}{\al} \df
    \f{m!}{\prod_j \al_j!}$. Due to the symmetry of $\mu$,
    each $\al_j$ must be even for $j \neq i$ for the expectation to be nonzero, and $\al_i$ must be
    odd. This implies that for every $\al$, the corresponding term is either zero or positive. We
    can therefore lower bound this quantity by only considering the terms with $\al_i = m$:
    \begin{align}
         \label{eq:dhzi}
        \E\inn{x, (\Mk x)^{\odot m}} &\geq \sum_{i=1}^k M_{ii}^{m} \E[x_i^{m + 1}] = p\mu_{m +
        1} \cdot k \gtr pk.
    \end{align}
    We now turn to the denominator, which may be upper bounded as follows:
    \begin{align}
         \label{eq:vucw}
         \sum_{i = 1}^k \E\Box{\Rnd{\sum_{j = 1}^k M_{ij} x_j}^{2m}} 
         &\overset{(a)}{\les}
         \sum_{i = 1}^k  \max\Rnd{\sum_j \E\Abs{M_{ij} x_j}^{2m}, \Rnd{\sum_j \E\Abs{M_{ij} x_j}^2}^m} \nonumber \\
         &\les
         \max\Rnd{p\cdot\sum_{i = 1}^k\sum_{j = 1}^k |M_{ij}|^{2m},\; p^m \cdot \sum_{i =
         1}^k\Rnd{\sum_{j = 1}^k M^2_{ij}}^m} \nonumber \\
         % &\les
         % \max\Rnd{pk + pk^2d^{-m},\; p^m k \cdot \Rnd{1 + C_m k d^{-1}}^m} \nonumber \\
         &\les \max\Rnd{pk,\; p^m k \Rnd{1 + kd^{-1}}^m},
    \end{align}
    where step $(a)$ is due to \textit{Rosenthal's inequality}, see \cite{rosenthal1970subspaces}.
    Combining \eqref{eq:dhzi} and \eqref{eq:vucw} yields
    \begin{align}
         % \label{eq:hisb}
         \sup_W \L(W) &\gtr \sup_{k \leq d^m} \min\Rnd{pk,\; p^{2 - m} k \cdot (1 + k/d)^{-m}}
         \nonumber \\
                      &\gtr \sup_{d \leq k \leq d^m} \min\Rnd{pk,\; p^{2 - m} k^{1 - m} d^{m}} \nonumber
    \end{align}
    where restricting to $k \geq d$ allows the bound $(1 + k/d)^{-m} \gtr (k/d)^{-m}$. If $p \leq
    d^{-m}$, the choice $k = d^m$ yields $p^{2 - m}k^{1 - m} d^m = p \cdot p^{1 - m} d^{2m - m^2} \geq pd^m$ and thus
    $\sup_W \L(W) \gtr pd^m$. Otherwise, choose\footnote{The
        integer effect can be ignored by noting that since this choice is $\geq d$, the nearest
        integer is within a factor of 2, which can be subsumed into the constant.} 
        $k = \f{d}{p} \cdot p^{1/m}$ (which is a valid choice since $p > d^{-m}$ implies $\f{d}{p}
        p^{1/m} < d^m$), so that $pk = dp^{1/m} = p^{2 - m} k^{1 - m} d^m$. This completes the
        proof.
\end{proof}

\begin{remark}
\label{rmk:heuristic}
\def\tB{\tilde{B}}
\def\tlam{\tilde{\lambda}}
\def\tV{\tilde{V}}
    We quickly note a heuristic that was in fact the guiding principle for our construction above.
Let us begin by extracting the first-order in $p$ from \eqref{eq:loss}. To that
end, we expand
\begin{align*}
     \L(W) &= 2 \cdot \E\inn{x, \phi(W^T W x)} - \E\norm{\phi(W^T W x)}_2^2 \nonumber \\
           &= p \cdot \Rnd{2\mu_{m + 1} \cdot \sum_{i = 1}^n A_{ii}^m  - \mu_{2m} \sum_{i, j=1}^n
           A_{ij}^{2m}} + \text{terms of order $p^2$},
\end{align*}
where $A = W^T W$. The quantity inside the parenthesis may be simplified by setting $A = cB$ where
$c = \Rnd{\mu_{m + 1}/\mu_{2m}}^{1/m}$, yielding
\begin{align*}
     \L(W) &= p \cdot \f{\mu_{m + 1}^2}{\mu_{2m}} \cdot \cF(B) + \text{terms of
     order $p^2$},
\end{align*}
where the $\mu$-free objective $\cF(B)$ is defined as
\begin{align}
    \label{eq:fb}
    \cF(B) = 2 \cdot \sum_{i=1}^n B_{ii}^m - \sum_{i, j=1}^n B_{ij}^{2m}.
\end{align}
which must be maximized over positive semidefinite matrices $B \in \R^{n \times n}$ with $\rk B \leq d$. Since, at least for small enough $p$, the first order should be dominant, this is an important objective to optimize, and the answer to this problem should dictate the very sparse limit.
Define $\tB = B^{\odot m}$ (pointwise power), so that
\[
    \cF(B) = \tr(2\tB - \tB^2) \leq \sum_{i=1}^{r} 2\tlam_i - \tlam_i^2 \leq r,
\]
where $r = \rk \tB$.  If $B = V^T V$ for $V \in \R^{d \times n}$, then $\tB = \tV^T \tV$ where
the columns of $\tV$ are the $m$-fold tensor products of the columns of $V$, and thus $\tV \in
\R^{d^m \times n}$. Consequently, $r \leq
d^m$, and thus $\cF(B)$ can be at most $d^m$. One may easily check that the construction in 
Lemma \ref{lem:const} (with an optimal scaling) achieves this upper bound up to constant factors.
\end{remark}

\section{Open problems and future directions}
During the development of the work in this article, we encountered a selection of interesting
problems, some of which we list below.

\begin{enumerate}
    \item Perhaps the most interesting question is the true correct order of $\sup_W \L(W)$,
        under natural conditions on $\mu$. Note that our $O(d)$ upper bound, which is really the
        only piece that we can expect to be loose, does not rely on the particular form of $\phi$
        considered here.
    \item We expect Theorem \ref{thm:d}, which is a direct 
        consequence of an $O(d)$ upper bound on \eqref{eq:dmain}, to hold under
        a wider range of conditions on $\mu$ than strong log-concavity.
    \item We are also interested in a tight understanding of the first-order loss \eqref{eq:fb},
        which dictates the loss behavior under very sparse inputs. This is particularly appealing
        to us due to its self-contained nature as an optimization problem over low-rank positive
        semidefinite matrices. It does not appear to yield easily to a direct spectral analysis since
        pointwise powers, in general, behave rather poorly with respect to the spectrum.
    \item The reader will note that the construction in Lemma \ref{lem:const}, used in the lower
        bound, is somewhat involved. Does the gradient descent on the loss truly converge to such
        highly structured solutions? This does not seem likely, and therefore it remains interesting to
        understand the true nature of solutions obtained via training.
\end{enumerate}

\bibliographystyle{plainnat}
\bibliography{ref}

% see https://tex.stackexchange.com/questions/174887/link-to-appendix-from-anywhere-in-the-document-goes-to-the-wrong-place
\appendix

% \crefalias{section}{appendix} % uncomment if you are using cleveref

\section{Lemmas required in the proofs}

\begin{lemma}
    \label{lem:dergibbs}
    Suppose $\{\mu_s\}_s$ is a family of Gibbs measures parameterized by $s \in \R$ where
    \[
        \mu_s(\d x) = Z(s)^{-1} e^{-V(x, s)} \d x, \quad Z(s) = \int
        e^{-V(x, s)} \d x,
    \]
    with $V$ smooth\footnote{Assumed to have sufficient decay at infinity to ensure
    all quantities involved are well-defined.}.
    Then, for any $f : \R^n \to \R$ (smooth and integrable)
    \[
        \frac{\d}{\d s} \E_{\mu_s} f(x) = -\Cov_{\mu_s}\Rnd{f(x), \frac{\partial
        V}{\partial s}(x, s)}.
    \]
\end{lemma}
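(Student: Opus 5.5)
The plan is to differentiate the ratio defining $\E_{\mu_s} f(x)$ directly. Write
\[
    \E_{\mu_s} f(x) = \f{N(s)}{Z(s)}, \qquad N(s) \df \int f(x) e^{-V(x, s)} \d x, \qquad Z(s) = \int e^{-V(x,s)} \d x .
\]
Both $N$ and $Z$ are differentiable in $s$. Moreover, we may interchange $\f{\d}{\d s}$ with the integral. This is justified by dominated convergence, using the decay assumption in the footnote: locally uniformly in $s$, the functions $|f(x)|\,|\partial_s V(x,s)|\, e^{-V(x,s)}$ and $|\partial_s V(x,s)|\, e^{-V(x,s)}$ are bounded by an integrable function of $x$. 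This interchange is the only step with real content. Under the standing ``sufficient decay'' hypothesis it is a routine application of the differentiation-under-the-integral theorem, so I do not expect any genuine obstacle.

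After the interchange, the derivatives are
\[
    N'(s) = -\int f(x)\, \f{\partial V}{\partial s}(x,s)\, e^{-V(x,s)} \d x, \qquad Z'(s) = -\int \f{\partial V}{\partial s}(x,s)\, e^{-V(x,s)} \d x .
\]
Apply the quotient rule, $\Rnd{N/Z}' = N'/Z - (N/Z)(Z'/Z)$, and recognize each ratio as an expectation under $\mu_s$:
\[
    N'/Z = -\E_{\mu_s}\Box{f \,\partial_s V}, \qquad Z'/Z = -\E_{\mu_s}\Box{\partial_s V}, \qquad N/Z = \E_{\mu_s} f .
\]
Substituting gives
\[
    \f{\d}{\d s}\E_{\mu_s} f = -\E_{\mu_s}\Box{f\, \partial_s V} + \E_{\mu_s} f \cdot \E_{\mu_s}\Box{\partial_s V} = -\Cov_{\mu_s}\Rnd{f, \partial_s V},
\]
which is the claimed identity.

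In the application within the proof of Theorem \ref{thm:d}, the variable $x$ plays the role of $z$ and the parameter $s$ plays the role of $y_j$. The identity then applies with $f(z) = z_i$, which is integrable under a strongly log-concave measure.
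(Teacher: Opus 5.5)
Your proposal is correct and takes essentially the paper's approach: both differentiate under the integral sign and identify the two resulting terms as $-\E_{\mu_s}[f\,\partial_s V]$ and $\E_{\mu_s} f \cdot \E_{\mu_s}[\partial_s V]$. The paper packages the computation through the log-derivative of the normalized density and the identity $\frac{\d}{\d s}\log Z(s) = -\E_{\mu_s}[\partial_s V]$, while you apply the quotient rule to $N/Z$, which amounts to the same calculation.
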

\begin{proof}
    \def\ds{\partial_s}
   Denote $p(x, s) = Z(s)^{-1} e^{-V(x, s)}$. Then,
    \begin{align*}
        \frac{\d}{\d s} \E_{\mu_s} f(x) &= \frac{\d}{\d s} \int f(x) p(x, s)
        \d x \\
        &= \int f(x) \ds p(x, s) \d x \\
        &= \int f(x) p(x, s) \cdot \ds \log p(x, s) \d x \\
        &= \E\Box{ f(x) \cdot \Rnd{-\ds V(x, s) - \frac{\d}{\d s} \log Z(s)} } \\
        &= -\E\Box{ f(x) \cdot \ds V(x, s) } - \E[f(x)] \cdot
        \frac{\d}{\d s} \log Z(s) \\
        &= -\E\Box{ f(x) \cdot \ds V(x, s) } + \E[f(x)] \cdot \E\Box{\ds V(x,
        s)} \\
        &= -\Cov\Rnd{f(x), \ds V(x, s)},
    \end{align*}
    since
    \[
        \frac{\d}{\d s} \log Z(s) = Z(s)^{-1} \frac{\d}{\d s} Z(s) = -\E\Box{\ds
        V(x, s)}.
    \]
\end{proof}

\begin{lemma}
    \label{lem:blockinv}
    Suppose $A$ is a positive definite block matrix of the form
    \[
        A = \begin{pmatrix}
            A_{11} & A_{12} \\
            A_{21} & A_{22}
        \end{pmatrix}, \quad A_{12} = A_{21}^T,
    \]
    where $A_{11}$ and $A_{22}$ are square matrices. Then,
    \[
        \norm{A_{12} A_{22}^{-1} A_{21}}_{\op} \leq \norm{A}_{\op}.
    \]
\end{lemma}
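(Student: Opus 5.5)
The approach is to write the operator norm of the positive semidefinite matrix $A_{12}A_{22}^{-1}A_{21}$ as a supremum of quadratic forms and compare each such form with a quadratic form of $A$. Note first that $A \succ 0$ forces $A_{22} \succ 0$, since $A_{22}$ is a principal submatrix. Hence $A_{22}^{-1}$ exists and is positive definite, so $A_{12}A_{22}^{-1}A_{21}$ is positive semidefinite. Its operator norm therefore equals
\[
    \sup_{\norm{u}_2 = 1} u^T A_{12} A_{22}^{-1} A_{21} u .
\]
It thus suffices to show that $u^T A_{12}A_{22}^{-1}A_{21}u \leq \norm{A}_{\op}$ for every unit vector $u$.

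Fix a unit vector $u$ and write $w \df A_{22}^{-1}A_{21}u$. The plan is to test $A$ against the vector $(0, w)$, which lies in the second block only:
\[
    \begin{pmatrix} 0 \\ w \end{pmatrix}^T A \begin{pmatrix} 0 \\ w \end{pmatrix} = w^T A_{22} w .
\]
This equals $u^T A_{12}A_{22}^{-1}A_{21}u$, which is exactly the quantity we want to bound. On the other hand, the left side is at most $\norm{A}_{\op}\norm{w}_2^2$. We therefore obtain
\[
    q \df u^T A_{12}A_{22}^{-1}A_{21}u \leq \norm{A}_{\op}\,\norm{w}_2^2 ,
\]
and the remaining task is to control $\norm{w}_2^2$ in terms of $q$.

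That control is the main obstacle, and I would handle it using the mixed vector $(u, -w)$ together with positivity of $A$. Expanding the quadratic form and using $A_{22}w = A_{21}u$ gives
\[
    \begin{pmatrix} u \\ -w \end{pmatrix}^T A \begin{pmatrix} u \\ -w \end{pmatrix} = u^T A_{11} u - q .
\]
This only yields $q \leq u^T A_{11}u$, which is itself at most $\norm{A}_{\op}$. That is already the claimed inequality.

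The argument is in fact cleaner than first intended: the second paragraph is unnecessary. The proof needs only three facts. The Schur complement $A_{11} - A_{12}A_{22}^{-1}A_{21}$ is positive semidefinite, which follows from $A \succeq 0$ via the test vector $(u, -w)$. This gives $A_{12}A_{22}^{-1}A_{21} \preceq A_{11}$. Finally, $\norm{A_{11}}_{\op} \leq \norm{A}_{\op}$, because $A_{11}$ is a compression of $A$. Combining these, $\norm{A_{12}A_{22}^{-1}A_{21}}_{\op} \leq \norm{A_{11}}_{\op} \leq \norm{A}_{\op}$.
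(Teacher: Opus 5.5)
Your proof is correct and is essentially the paper's argument: the Schur complement is positive semidefinite, so $A_{12}A_{22}^{-1}A_{21} \preceq A_{11}$, and then $\norm{A_{11}}_{\op} \leq \norm{A}_{\op}$. The only difference is that you check the Schur complement directly with the test vector $(u,-w)$, $w = A_{22}^{-1}A_{21}u$, where the paper appeals to the block-inverse formula; your second paragraph, the $(0,w)$ test vector, is a dead end and can simply be deleted.
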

\begin{proof}
    Since $A \succ 0$, we have $A_{22} \succ 0$ as well. Thus, the Schur
    complement $S = A_{11} - A_{12} A_{22}^{-1} A_{21} \succ 0$ (a quick way to
    see  this is that this is the top-left block in the block-inverse of $A$). 
    Therefore,
    \[
        A_{12} A_{22}^{-1} A_{21} \preceq A_{11},
    \]
    which implies the desired operator norm bound since $\norm{A_{11}}_{\op} \leq
    \norm{A}_{\op}$.
\end{proof}
    
\begin{lemma}
     \label{lem:linearproblem}
    Suppose $x \in \R^q$ and $y \in \R^r$ are random vectors with all finite moments. Then,
    \[
        \sup_{C \in \R^{q \times r}} \Rnd{2 \cdot \E\inn{x, C y} - \E\norm{C
        y}_2^2} = \tr\Rnd{M \Sig^\dagger M^T},
    \]
    where $\Sig = \E[y y^T]$, $M = \E[x y^T]$, and $\Sig^\dagger$ is the
    Moore-Penrose pseudoinverse of $\Sig$.
\end{lemma}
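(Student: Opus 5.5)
The objective is a concave quadratic in $C$, so I would compute the supremum by completing the square. Write
\[
F(C) \df 2 \cdot \E\inn{x, C y} - \E\norm{C y}_2^2 = 2 \tr(C M^T) - \tr(C \Sig C^T),
\]
using $\E\inn{x, Cy} = \tr(C\,\E[y x^T]) = \tr(C M^T)$ and $\E\norm{Cy}_2^2 = \tr(C \E[yy^T] C^T)$. All of these expectations are finite because $x$ and $y$ have all finite moments. The problem therefore reduces to maximizing a matrix quadratic form determined by the two moment matrices $M$ and $\Sig$.

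The main point is that $M$ annihilates the kernel of $\Sig$. If $\Sig u = 0$, then $\E\inn{y, u}^2 = u^T \Sig u = 0$, so $\inn{y, u} = 0$ almost surely. Hence $M u = \E[x \inn{y,u}] = 0$. Consequently the row space of $M$ lies in the range of $\Sig$, and with $\Pi \df \Sig\Sig^\dagger$ the orthogonal projection onto $\mathrm{range}(\Sig)$ we get $M = M\Pi = M \Sig^\dagger \Sig$.

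Set $C_* \df M\Sig^\dagger$. Then $C_*\Sig = M\Pi = M$, and for arbitrary $C$ the square can be completed:
\[
F(C) = \tr(M\Sig^\dagger M^T) - \tr\big((C - C_*)\Sig(C - C_*)^T\big).
\]
To verify this, expand the right side. It equals $\tr(M\Sig^\dagger M^T) - \tr(C\Sig C^T) + 2\tr(C \Sig C_*^T) - \tr(C_*\Sig C_*^T)$. Now use three identities:
\begin{itemize}
\item $\Sig C_*^T = \Sig\Sig^\dagger M^T = \Pi M^T = M^T$, so $\tr(C\Sig C_*^T) = \tr(CM^T)$;
\item $C_*\Sig C_*^T = M\Sig^\dagger\Sig\Sig^\dagger M^T = M\Sig^\dagger M^T$, using $\Sig^\dagger \Sig \Sig^\dagger = \Sig^\dagger$;
\item $\Sig^\dagger$ is symmetric.
\end{itemize}
With these, the first and last terms cancel and the expansion reduces to $F(C)$.

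Since $\Sig \succeq 0$, the subtracted trace is nonnegative, so $F(C) \leq \tr(M\Sig^\dagger M^T)$, with equality at $C = C_*$. This gives the claimed supremum, which is in fact attained. The only nontrivial step is the kernel inclusion $\ker\Sig \subseteq \ker M$. Without it, $F$ could be unbounded in directions where $\Sig$ vanishes but $M$ does not, so I expect this to be the place requiring care; the almost-sure argument above settles it.
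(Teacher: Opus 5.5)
Your proof is correct. It shares the paper's skeleton (the same concave quadratic $2\tr(CM^T)-\tr(C\Sigma C^T)$, the same maximizer $C_*=M\Sigma^\dagger$, and the same crucial range condition), but it establishes the two key points differently.

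The paper's route is as follows:
\begin{itemize}
\item It sets the gradient $2M-2C\Sigma$ to zero and notes that a stationary point exists iff every row of $M$ lies in the row space of $\Sigma$. It then uses concavity to conclude that $C=M\Sigma^\dagger$ is optimal.
\item It gets the range condition indirectly. First it shows the supremum is finite via the pointwise bound $2\langle x,Cy\rangle-\|Cy\|_2^2\le\|x\|_2^2$. Then it argues by contradiction: if some row $M_i$ had a component in $\ker\Sigma$, a $C$ with a single nonzero row along that component would make the objective unbounded.
\end{itemize}

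Your route differs on both points:
\begin{itemize}
\item You prove $\ker\Sigma\subseteq\ker M$ directly: $u^T\Sigma u=0$ forces $\langle y,u\rangle=0$ almost surely, hence $Mu=0$.
\item You replace stationarity plus concavity with the exact identity $F(C)=\tr(M\Sigma^\dagger M^T)-\tr\bigl((C-C_*)\Sigma(C-C_*)^T\bigr)$.
\end{itemize}

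What your version buys is directness. It explains why the range condition holds instead of deducing it from boundedness, and it needs neither the finiteness preliminary nor the fact that a stationary point of a concave function is a global maximizer. It also gives attainment at $C_*$ together with an explicit formula for the suboptimality gap.

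The paper's contradiction argument works purely at the level of the moment matrices once finiteness is known. It avoids any almost-sure reasoning about $y$, but it is less informative about the structure of the problem.
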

\begin{proof}
    To begin, observe that the supremum is finite, since
    \[
        2\inn{x, C y} - \norm{C y}_2^2 \leq \norm{x}_2^2,
    \]
    and therefore the objective is bounded above by $\E\norm{x}_2^2$, finite by
    assumption.
    Let us begin by writing the objective as a deterministic function
    of $C$:
    \begin{align*}
         \label{eq:dgsh}
        \E[2\inn{x, C y} - \norm{C y}_2^2] &= 
                2 \cdot \tr\Rnd{C \cdot \E[y x^T]} - \tr\Rnd{C \cdot \E[y y^T] \cdot C^T} \\
                                         &= 2 \cdot \tr(C M^T) - \tr(C \Sig C^T), 
    \end{align*}
    where $\Sig = \E[y y^T]$ and $M = \E[x y^T]$. Note that $\Sig \succeq 0$,
    and therefore, the objective is a concave function of $C$.

    Consider the first order optimality condition. Differentiating with respect to
    $C$ and setting to zero, we obtain
    \[
        \nabla_C \Rnd{2 \cdot \tr(C M^T) - \tr(C \Sig C^T)} = 2M - 2 C \Sig = 0,
    \]  
    a solution to which would exist if and only if each row $M_i$ of $M$ lies 
    in the row space of $\Sig$. In that case, note that the choice of $C = M\Sig^\dagger$
    is valid, since by hypothesis, there is some $D$ satisfying $M = D \Sig$, and therefore,
    \[
        C \Sig = M \Sig^\dagger \Sig = D \Sig \Sig^\dagger \Sig = D \Sig = M,
    \]
    by properties of the pseudoinverse. Plugging this choice into the objective yields the value
    \[
        2 \cdot \tr(M\Sig^\dagger M^T) - \tr(M \Sig^\dagger \Sig \Sig^\dagger M^T) = \tr(M \Sig^\dagger M^T),
    \]
    as desired.

    If, however, some row $M_i$ does not lie in the row space of $\Sig$, construct $C$
    such that every row of $C$ is zero except the $i$-th row $C_i$. Then, the objective reduces to
    \[
        2 C_i M_i^T - C_i \Sig C_i^T \in \R.
    \]
    By choosing $C_i$ to be a multiple of a vector orthogonal to the row space of $\Sig$ but with
    positive inner product with $M_i$, the objective may be made arbitrarily large, contradicting
    the finiteness of the supremum. Consequently this case must be impossible, concluding the proof.
\end{proof}

We also complete the proof of Lemma \ref{lem:const}, restated here for convenience.
\begin{lemma}
    \label{lem:consttwo}
    Fix $m$. For all sufficiently large $d$, there is a matrix $M \in \R^{d^m
    \times d^m}$ of the form $M = V^T V$ with $V \in \R^{d \times d^m}$ (so that
    $\rk M \leq d$) satisfying
    \begin{align}
         \label{eq:ltabrestated}
         M_{ii} &= 1, \quad \forall i, \\
         \Abs{M_{ij}} &\leq C_m d^{-1/2} < 1, \quad \forall i \neq j,
    \end{align}
    where $C_m$ depends only on $m$.
\end{lemma}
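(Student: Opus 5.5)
The plan is to build $V$ from exponential sums over a finite field and control the off-diagonal inner products with the Weil bound. Fix a prime $q$ with $d/4 \leq q \leq d/2$; such a prime exists by Bertrand's postulate. Consider the family of polynomials of degree at most $m$ over $\mathbb{F}_q$,
\[
f_a(x) = a_1 x + a_2 x^2 + \cdots + a_m x^m, \qquad a = (a_1, \ldots, a_m) \in \mathbb{F}_q^m .
\]
This family has $q^m$ members, which is fewer than the $d^m$ we need by a factor of roughly $2^m$ to $4^m$. To close the gap we will use polynomials of degree up to $m' = m+2$ (or any fixed larger degree), giving $q^{m'} \geq (d/4)^{m+2} \geq d^m$ members once $d$ is large depending on $m$. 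We then keep any $d^m$ of them.

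To each polynomial $f$ we attach the complex unit vector
\[
w_f = q^{-1/2}\bigl(e^{2\pi i f(x)/q}\bigr)_{x \in \mathbb{F}_q} \in \mathbb{C}^q .
\]
Its real and imaginary parts give a real vector $v_f \in \R^{2q} \subseteq \R^d$ (padded with zeros) with $\norm{v_f}_2 = 1$. For $f \neq g$ we have
\[
\inn{v_f, v_g} = \mathrm{Re}\,\inn{w_f, \overline{w_g}} = q^{-1}\,\mathrm{Re}\sum_{x} e^{2\pi i (f-g)(x)/q}.
\]
Here $f - g$ is a nonconstant polynomial of degree at most $m' < q$ (this holds for large $d$), because every member of the family has zero constant term.

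The key tool is Weil's bound: for a nonconstant polynomial $h$ over $\mathbb{F}_q$ of degree $k < q$,
\[
\Abs{\textstyle\sum_{x \in \mathbb{F}_q} e^{2\pi i h(x)/q}} \leq (k-1)\sqrt{q}.
\]
Applying it gives $\Abs{\inn{v_f, v_g}} \leq (m'-1) q^{-1/2} \leq 2(m+1) d^{-1/2}$. We then set $V = [v_{f_1}, \ldots, v_{f_{d^m}}] \in \R^{d \times d^m}$ and $M = V^T V$. The diagonal entries equal $1$, and the off-diagonal entries are at most $C_m d^{-1/2}$ with $C_m = 2(m+1)$. This is $<1$ once $d > C_m^2$, which accounts for "sufficiently large $d$".

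The main obstacle is just invoking Weil's bound correctly; it is the "strong tool from finite field theory". We must make sure $\deg(f-g) < q$, which is automatic for large $d$, and handle the real/complex conversion. An alternative is to take real parts with a cosine-and-sine pair. Our concatenation of real and imaginary parts is exact, since for complex unit vectors $\mathrm{Re}\,\inn{w_f, w_g}$ equals the real inner product of the stacked vectors. So the only subtlety is conjugation conventions, and the counting adjustment ($m' = m+2$ rather than $m$) handles the factor lost to choosing $q \approx d/2$.
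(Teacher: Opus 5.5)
Your construction is correct. It uses the same key input as the paper: Weil's bound on additive character sums of polynomials, with the columns of $V$ indexed by coefficient vectors. The technical route differs, though.

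The paper works over the binary field $\mathbb{F}_{2^r}$ with the real character $\psi(a) = (-1)^{\tr(a)}$. It uses only odd powers $x^{2j+1}$, so every nonzero difference polynomial has odd degree and cannot have the exceptional form $Q^2 - Q + c$. As a result, $V$ is a scaled sign matrix and no passage from complex to real vectors is needed. For $d = 2^r$ this gives $C_m = 2m-2$ with no padding. General $d$ is handled by rounding down to a power of two and using $2m$ odd-power coefficients to recover the column count.

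You instead take a prime field $\mathbb{F}_q$ with $q \in (d/4, d/2)$, which Bertrand's postulate supplies. You use complex characters $e^{2\pi i h(x)/q}$ and stack real and imaginary parts into $\R^{2q} \subseteq \R^d$. The zero constant term makes $f-g$ nonconstant. The condition $\deg(f-g) \leq m+2 < q$ is the only hypothesis the prime-field Weil bound needs, so you never check an Artin--Schreier-type condition. This version of Weil's bound is also the most commonly quoted one.

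The cost is a constant-factor loss: a factor $2$ in dimension from the realification, and up to another $2$ from Bertrand. You recover it correctly by raising the degree to $m+2$, which gives $q^{m+2} \geq d^m$ once $d \geq 2^{m+2}$, and $C_m = 2(m+1)$.

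One small notational slip: with the Hermitian convention you use at the end, the stacked real inner product is $\mathrm{Re}\,\inn{w_f, w_g}$, not $\mathrm{Re}\,\inn{w_f, \overline{w_g}}$; the latter would produce $f+g$ instead of $f-g$. Your displayed formula with $f-g$ is the right one.
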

\begin{proof}
    \def\Fr{\mathbb{F}_{2^r}}
    \def\F{\mathbb{F}_2}
    The proof of this will invoke tools from classical field theory.
    Constructions of this type have been used extensively in the literature
    on deterministic sensing matrices for compressed sensing and low-correlation
    sequences, see \cite{wang2013new, yu2011additive, schmidt2011sequence}, and
    references therein. For the sake of convenience and completeness, we exhibit
    the main ideas specialized to our simple requirements, relying only upon
    basic facts from field theory, except for the only deep fact used here,
    the Weil bound on character sums; see \cite{lidl1983finite}, also
    \cite{koppartyelementary} for a simpler exposition of the result.

    We use $\mathbb{F}_n$ for the unique field of order $n$, if it exists.
    First assume $d = 2^r$, a power of two, and consider the field
    $\Fr$ (constructed from $\F[x]$ modulo an irreducible polynomial of
    degree $r$). Let $\tr : \Fr \to \F$ be the trace\footnote{
        It may be shown that this is the usual matrix trace of the $\F$-linear map
        $L_a : \Fr \to \Fr$ defined as $L_a(z) = a z$ for fixed $a \in \Fr$.
    } map defined as
    \[
        \tr(a) = a + a^2 + a^{2^2} + \cdots + a^{2^{r - 1}}.
    \]
    Note that for any $a \in \Fr$,
    $a^2 = a$ if and only if $a \in \F$ (since this is degree 2 polynomial and
    all the elements in $\F$ are roots). Therefore, since $(x + y)^2 = x^2 +
    y^2$ in $\F$, we have $\tr(a) \in \F$ for all $a \in \Fr$ (since $\tr(a)^2 =
    \tr(a)$). Further, note that $\tr$ is $\F$-linear, that is, for any $a, b \in
    \Fr$, $\tr(a + b) = \tr(a) + \tr(b)$ and $\tr(ca) = c \tr(a)$ for all $c \in \F$.
    
    Now consider the character $\chi : \F \to \R$ defined as 
    \[
        \chi(0) = 1, \quad \chi(1) = -1.
    \]
    (usually characters are defined to take values in the complex unit
    circle, but this is a real-valued character), and define the function
    \[
        \psi(a) = \chi(\tr(a)) = (-1)^{\tr(a)}, \quad a \in \Fr.
    \]
    Note that $\psi$ is an additive character of $\Fr$, that is, for any $a, b \in
    \Fr$, $\psi(a + b) = \psi(a) \cdot \psi(b)$.
    \def\a{\mathbf{a}}
    \def\b{\mathbf{b}}

    Define the matrix $V \in \R^{d \times d^m}$ with rows indexed by $\Fr$ and 
    columns indexed by tuples $\a = (a_0, \ldots, a_{m - 1}) \in \Fr^m$ (i.e., $d^m$ columns)
    as follows:
    \[
        V_{x; \a} = \f{1}{\sqrt{d}} \cdot \psi\Rnd{
            \sum_{j = 0}^{m - 1} a_j x^{2j + 1}
        }.
    \]
    We will now check that $V$ satisfies our requirements. Note that for any $x$
    and two tuples $\a = (a_0, \ldots, a_{m - 1})$, $\b = (b_0, \ldots, b_{m - 1})$,
    \begin{align*}
        V_{x; \a} \cdot V_{x; \b} &=
        \f{1}{d} \cdot \psi\Rnd{
            \sum_{j = 0}^{m - 1} (a_j - b_j) x^{2j + 1}
        }.
    \end{align*}
    (the minus sign is due to the observation that $\psi(-z) = \psi(z)$). If $\a
    = \b$, then $V_{x; \a} \cdot V_{x; \b} = 1/d$, and thus $(V^T V)_{\a;\a} = 1$
    for all $\a$, satisfying the diagonal requirement. If $\a \neq \b$, then
    \begin{align}
         \label{eq:wema}
         (V^T V)_{\a;\b} &= \f{1}{d} \cdot \sum_{x \in \Fr} \psi(P(x)), \quad
         \text{where},\\
         P(x) &= \sum_{j = 0}^{m - 1} (a_j - b_j) x^{2j + 1} \not\equiv 0. \nonumber
    \end{align}
    At this point we will invoke Weil's character sum bounds (specialized to
    our case) which states
    that for any nontrivial additive character $\psi$ of $\Fr$ and polynomial
    $P$ of degree $n$ over $\Fr$ which is not of the form $Q(x)^2 - Q(x) + c$ for
    some polynomial $Q$ and constant $c$, we have
    \[
        \Abs{\sum_{x \in \Fr} \psi(P(x))} \leq (n - 1) \sqrt{2^r} = (n - 1)
        \sqrt{d}.
    \]
    Note that any polynomial $P$ of odd degree cannot be of the form
    $Q(x)^2 - Q(x) + c$ since the latter is always of even degree. Note that as
    long as the $P$ in \eqref{eq:wema} is nonzero, it is of odd degree, since
    each term is of odd degree. Therefore, applying Weil's bound with $n =
    \deg P \leq 2m - 1$, we have
    \[
        \Abs{(V^T V)_{\a;\b}} \leq \f{1}{d} \cdot (2m - 2) \sqrt{d} = \f{2m -
        2}{\sqrt{d}},
    \]
    completing the proof for $d$ a power of two with $C_m = 2m - 2$.

    For a general $d$, choose $r$ such that $2^r \leq d < 2^{r + 1}$, and
    repeat the same construction as above, but with $2m$ instead of $m$ (so that
    the number of columns is $2^{2rm} \geq d^m$ as long as $d$ is large enough).
    Then, select a $2^r \times d^m$ submatrix of $V$ (say, the first $d^m$
    columns),
    and extend it to a $d \times d^m$ matrix by adding zero rows at the bottom.
    This completes the construction.
\end{proof}

\end{document}

%% file: full_loss.tex
% This file was created with tikzplotlib v0.10.1.
\begin{tikzpicture}[scale=0.8]

\definecolor{chocolate213940}{RGB}{213,94,0}
\definecolor{darkcyan0114178}{RGB}{0,114,178}
\definecolor{darkcyan0158115}{RGB}{0,158,115}
\definecolor{darkgray176}{RGB}{176,176,176}

\begin{groupplot}[group style={group size=2 by 1}]
\nextgroupplot[
legend cell align={left},
legend style={
  fill opacity=0.8,
  draw opacity=1,
  text opacity=1,
  at={(0.03,0.97)},
  anchor=north west,
  draw=none
},
tick align=outside,
tick pos=left,
title={\(\displaystyle d=10\)},
x grid style={darkgray176},
xlabel={\(\displaystyle \log_{d}p\)},
xmajorgrids,
xmin=-5.47773116948125, xmax=-1.59030906276756,
xtick style={color=black},
y grid style={darkgray176},
ylabel={\(\displaystyle \log_{d}\L\)},
ymajorgrids,
ymin=-3.65114519157863, ymax=0.236276915135057,
ytick style={color=black}
]
\addplot [line width=0.72pt, darkcyan0114178, mark=*, mark size=3, mark options={solid}, only marks]
table {%
-5.30103016463063 -3.47444418672801
-5.16510631478244 -3.33877105486949
-5.02918246306018 -3.20318175037114
-4.89325861711414 -3.067704881341
-4.75733474313267 -2.93239320207349
-4.62141089239396 -2.79725508824061
-4.48548706363159 -2.6624604972405
-4.34956321699251 -2.5280124778874
-4.21363935092383 -2.3940590646652
-4.07771550714421 -2.26062946606189
-3.94179166814405 -2.12823570020432
-3.80586780597046 -1.99670600535054
-3.66994394624151 -1.86664744890975
-3.53402010287867 -1.73785604736848
-3.39809624200602 -1.61118356129391
-3.26217239421951 -1.47967634968477
-3.12624855413446 -1.35393571054838
-2.99032472350338 -1.23177202518336
-2.85440085505976 -1.1172406115129
-2.71847699592281 -1.01007841270098
-2.58255315094671 -0.900670499062681
-2.44662929833285 -0.798982269807326
-2.31070545212212 -0.700369477778393
-2.17478159536402 -0.603442221352721
-2.03885773326491 -0.507980879565638
-1.90293391617813 -0.409801394086856
-1.76701006761818 -0.315742741123516
};
\addlegendentry{experimental values}
\addplot [thick, chocolate213940, dashed]
table {%
-5.30103016463063 -3.47444418672801
-5.28327126967076 -3.45668529176815
-5.2655123747109 -3.43892639680829
-5.24775347975104 -3.42116750184843
-5.22999458479118 -3.40340860688856
-5.21223568983132 -3.3856497119287
-5.19447679487146 -3.36789081696884
-5.1767178999116 -3.35013192200898
-5.15895900495173 -3.33237302704912
-5.14120010999187 -3.31461413208926
-5.12344121503201 -3.2968552371294
-5.10568232007215 -3.27909634216953
-5.08792342511229 -3.26133744720967
-5.07016453015243 -3.24357855224981
-5.05240563519256 -3.22581965728995
-5.0346467402327 -3.20806076233009
-5.01688784527284 -3.19030186737023
-4.99912895031298 -3.17254297241036
-4.98137005535312 -3.1547840774505
-4.96361116039326 -3.13702518249064
-4.9458522654334 -3.11926628753078
-4.92809337047353 -3.10150739257092
-4.91033447551367 -3.08374849761106
-4.89257558055381 -3.0659896026512
-4.87481668559395 -3.04823070769133
-4.85705779063409 -3.03047181273147
-4.83929889567423 -3.01271291777161
-4.82154000071437 -2.99495402281175
-4.8037811057545 -2.97719512785189
-4.78602221079464 -2.95943623289203
-4.76826331583478 -2.94167733793216
-4.75050442087492 -2.9239184429723
-4.73274552591506 -2.90615954801244
-4.7149866309552 -2.88840065305258
-4.69722773599533 -2.87064175809272
-4.67946884103547 -2.85288286313286
-4.66170994607561 -2.83512396817299
-4.64395105111575 -2.81736507321313
-4.62619215615589 -2.79960617825327
-4.60843326119603 -2.78184728329341
-4.59067436623616 -2.76408838833355
-4.5729154712763 -2.74632949337369
-4.55515657631644 -2.72857059841383
-4.53739768135658 -2.71081170345396
-4.51963878639672 -2.6930528084941
-4.50187989143686 -2.67529391353424
-4.484120996477 -2.65753501857438
-4.46636210151713 -2.63977612361452
-4.44860320655727 -2.62201722865466
-4.43084431159741 -2.6042583336948
-4.41308541663755 -2.58649943873493
-4.39532652167769 -2.56874054377507
-4.37756762671783 -2.55098164881521
-4.35980873175797 -2.53322275385535
-4.3420498367981 -2.51546385889549
-4.32429094183824 -2.49770496393563
-4.30653204687838 -2.47994606897576
-4.28877315191852 -2.4621871740159
-4.27101425695866 -2.44442827905604
-4.2532553619988 -2.42666938409618
-4.23549646703893 -2.40891048913632
-4.21773757207907 -2.39115159417646
-4.19997867711921 -2.3733926992166
-4.18221978215935 -2.35563380425673
-4.16446088719949 -2.33787490929687
-4.14670199223963 -2.32011601433701
-4.12894309727977 -2.30235711937715
-4.1111842023199 -2.28459822441729
-4.09342530736004 -2.26683932945743
-4.07566641240018 -2.24908043449756
-4.05790751744032 -2.2313215395377
-4.04014862248046 -2.21356264457784
-4.0223897275206 -2.19580374961798
-4.00463083256073 -2.17804485465812
-3.98687193760087 -2.16028595969826
-3.96911304264101 -2.1425270647384
-3.95135414768115 -2.12476816977853
-3.93359525272129 -2.10700927481867
-3.91583635776143 -2.08925037985881
-3.89807746280156 -2.07149148489895
-3.8803185678417 -2.05373258993909
-3.86255967288184 -2.03597369497923
-3.84480077792198 -2.01821480001936
-3.82704188296212 -2.0004559050595
-3.80928298800226 -1.98269701009964
-3.7915240930424 -1.96493811513978
-3.77376519808253 -1.94717922017992
-3.75600630312267 -1.92942032522006
-3.73824740816281 -1.9116614302602
-3.72048851320295 -1.89390253530033
-3.70272961824309 -1.87614364034047
-3.68497072328323 -1.85838474538061
-3.66721182832337 -1.84062585042075
-3.6494529333635 -1.82286695546089
-3.63169403840364 -1.80510806050103
-3.61393514344378 -1.78734916554116
-3.59617624848392 -1.7695902705813
-3.57841735352406 -1.75183137562144
-3.5606584585642 -1.73407248066158
-3.54289956360433 -1.71631358570172
-3.52514066864447 -1.69855469074186
-3.50738177368461 -1.680795795782
-3.48962287872475 -1.66303690082213
-3.47186398376489 -1.64527800586227
-3.45410508880503 -1.62751911090241
-3.43634619384517 -1.60976021594255
-3.4185872988853 -1.59200132098269
-3.40082840392544 -1.57424242602283
-3.38306950896558 -1.55648353106296
-3.36531061400572 -1.5387246361031
-3.34755171904586 -1.52096574114324
-3.329792824086 -1.50320684618338
-3.31203392912613 -1.48544795122352
-3.29427503416627 -1.46768905626366
-3.27651613920641 -1.4499301613038
-3.25875724424655 -1.43217126634393
-3.24099834928669 -1.41441237138407
-3.22323945432683 -1.39665347642421
-3.20548055936697 -1.37889458146435
-3.1877216644071 -1.36113568650449
-3.16996276944724 -1.34337679154463
-3.15220387448738 -1.32561789658476
-3.13444497952752 -1.3078590016249
-3.11668608456766 -1.29010010666504
-3.0989271896078 -1.27234121170518
-3.08116829464793 -1.25458231674532
-3.06340939968807 -1.23682342178546
-3.04565050472821 -1.2190645268256
-3.02789160976835 -1.20130563186573
-3.01013271480849 -1.18354673690587
-2.99237381984863 -1.16578784194601
-2.97461492488877 -1.14802894698615
-2.9568560299289 -1.13027005202629
-2.93909713496904 -1.11251115706643
-2.92133824000918 -1.09475226210656
-2.90357934504932 -1.0769933671467
-2.88582045008946 -1.05923447218684
-2.8680615551296 -1.04147557722698
-2.85030266016973 -1.02371668226712
-2.83254376520987 -1.00595778730726
-2.81478487025001 -0.988198892347396
-2.79702597529015 -0.970439997387534
-2.77926708033029 -0.952681102427673
-2.76150818537043 -0.934922207467811
-2.74374929041057 -0.917163312507949
-2.7259903954507 -0.899404417548088
-2.70823150049084 -0.881645522588226
-2.69047260553098 -0.863886627628365
-2.67271371057112 -0.846127732668503
-2.65495481561126 -0.828368837708642
-2.6371959206514 -0.81060994274878
-2.61943702569153 -0.792851047788919
-2.60167813073167 -0.775092152829057
-2.58391923577181 -0.757333257869196
-2.56616034081195 -0.739574362909334
-2.54840144585209 -0.721815467949472
-2.53064255089223 -0.704056572989611
-2.51288365593237 -0.686297678029749
-2.4951247609725 -0.668538783069888
-2.47736586601264 -0.650779888110026
-2.45960697105278 -0.633020993150165
-2.44184807609292 -0.615262098190303
-2.42408918113306 -0.597503203230442
-2.4063302861732 -0.57974430827058
-2.38857139121333 -0.561985413310719
-2.37081249625347 -0.544226518350857
-2.35305360129361 -0.526467623390996
-2.33529470633375 -0.508708728431134
-2.31753581137389 -0.490949833471273
-2.29977691641403 -0.473190938511411
-2.28201802145417 -0.45543204355155
-2.2642591264943 -0.437673148591688
-2.24650023153444 -0.419914253631827
-2.22874133657458 -0.402155358671965
-2.21098244161472 -0.384396463712104
-2.19322354665486 -0.366637568752242
-2.175464651695 -0.348878673792381
-2.15770575673513 -0.331119778832519
-2.13994686177527 -0.313360883872657
-2.12218796681541 -0.295601988912796
-2.10442907185555 -0.277843093952934
-2.08667017689569 -0.260084198993073
-2.06891128193583 -0.242325304033211
-2.05115238697597 -0.22456640907335
-2.0333934920161 -0.206807514113488
-2.01563459705624 -0.189048619153627
-1.99787570209638 -0.171289724193765
-1.98011680713652 -0.153530829233903
-1.96235791217666 -0.135771934274042
-1.9445990172168 -0.11801303931418
-1.92684012225693 -0.100254144354319
-1.90908122729707 -0.0824952493944573
-1.89132233233721 -0.0647363544345958
-1.87356343737735 -0.0469774594747343
-1.85580454241749 -0.0292185645148728
-1.83804564745763 -0.0114596695550113
-1.82028675249777 0.00629922540485017
-1.8025278575379 0.0240581203647117
-1.78476896257804 0.0418170153245736
-1.76701006761818 0.0595759102844347
};
\addlegendentry{slope $1$}
\addplot [thick, darkcyan0158115, dashed]
table {%
-5.30103016463063 -2.00200717222578
-5.28327126967076 -1.99608754057249
-5.2655123747109 -1.9901679089192
-5.24775347975104 -1.98424827726592
-5.22999458479118 -1.97832864561263
-5.21223568983132 -1.97240901395934
-5.19447679487146 -1.96648938230606
-5.1767178999116 -1.96056975065277
-5.15895900495173 -1.95465011899948
-5.14120010999187 -1.94873048734619
-5.12344121503201 -1.94281085569291
-5.10568232007215 -1.93689122403962
-5.08792342511229 -1.93097159238633
-5.07016453015243 -1.92505196073305
-5.05240563519256 -1.91913232907976
-5.0346467402327 -1.91321269742647
-5.01688784527284 -1.90729306577318
-4.99912895031298 -1.9013734341199
-4.98137005535312 -1.89545380246661
-4.96361116039326 -1.88953417081332
-4.9458522654334 -1.88361453916004
-4.92809337047353 -1.87769490750675
-4.91033447551367 -1.87177527585346
-4.89257558055381 -1.86585564420017
-4.87481668559395 -1.85993601254689
-4.85705779063409 -1.8540163808936
-4.83929889567423 -1.84809674924031
-4.82154000071437 -1.84217711758703
-4.8037811057545 -1.83625748593374
-4.78602221079464 -1.83033785428045
-4.76826331583478 -1.82441822262716
-4.75050442087492 -1.81849859097388
-4.73274552591506 -1.81257895932059
-4.7149866309552 -1.8066593276673
-4.69722773599533 -1.80073969601401
-4.67946884103547 -1.79482006436073
-4.66170994607561 -1.78890043270744
-4.64395105111575 -1.78298080105415
-4.62619215615589 -1.77706116940087
-4.60843326119603 -1.77114153774758
-4.59067436623616 -1.76522190609429
-4.5729154712763 -1.759302274441
-4.55515657631644 -1.75338264278772
-4.53739768135658 -1.74746301113443
-4.51963878639672 -1.74154337948114
-4.50187989143686 -1.73562374782786
-4.484120996477 -1.72970411617457
-4.46636210151713 -1.72378448452128
-4.44860320655727 -1.71786485286799
-4.43084431159741 -1.71194522121471
-4.41308541663755 -1.70602558956142
-4.39532652167769 -1.70010595790813
-4.37756762671783 -1.69418632625485
-4.35980873175797 -1.68826669460156
-4.3420498367981 -1.68234706294827
-4.32429094183824 -1.67642743129498
-4.30653204687838 -1.6705077996417
-4.28877315191852 -1.66458816798841
-4.27101425695866 -1.65866853633512
-4.2532553619988 -1.65274890468184
-4.23549646703893 -1.64682927302855
-4.21773757207907 -1.64090964137526
-4.19997867711921 -1.63499000972197
-4.18221978215935 -1.62907037806869
-4.16446088719949 -1.6231507464154
-4.14670199223963 -1.61723111476211
-4.12894309727977 -1.61131148310883
-4.1111842023199 -1.60539185145554
-4.09342530736004 -1.59947221980225
-4.07566641240018 -1.59355258814896
-4.05790751744032 -1.58763295649568
-4.04014862248046 -1.58171332484239
-4.0223897275206 -1.5757936931891
-4.00463083256073 -1.56987406153581
-3.98687193760087 -1.56395442988253
-3.96911304264101 -1.55803479822924
-3.95135414768115 -1.55211516657595
-3.93359525272129 -1.54619553492267
-3.91583635776143 -1.54027590326938
-3.89807746280156 -1.53435627161609
-3.8803185678417 -1.5284366399628
-3.86255967288184 -1.52251700830952
-3.84480077792198 -1.51659737665623
-3.82704188296212 -1.51067774500294
-3.80928298800226 -1.50475811334966
-3.7915240930424 -1.49883848169637
-3.77376519808253 -1.49291885004308
-3.75600630312267 -1.48699921838979
-3.73824740816281 -1.48107958673651
-3.72048851320295 -1.47515995508322
-3.70272961824309 -1.46924032342993
-3.68497072328323 -1.46332069177665
-3.66721182832337 -1.45740106012336
-3.6494529333635 -1.45148142847007
-3.63169403840364 -1.44556179681678
-3.61393514344378 -1.4396421651635
-3.59617624848392 -1.43372253351021
-3.57841735352406 -1.42780290185692
-3.5606584585642 -1.42188327020364
-3.54289956360433 -1.41596363855035
-3.52514066864447 -1.41004400689706
-3.50738177368461 -1.40412437524377
-3.48962287872475 -1.39820474359049
-3.47186398376489 -1.3922851119372
-3.45410508880503 -1.38636548028391
-3.43634619384517 -1.38044584863063
-3.4185872988853 -1.37452621697734
-3.40082840392544 -1.36860658532405
-3.38306950896558 -1.36268695367076
-3.36531061400572 -1.35676732201748
-3.34755171904586 -1.35084769036419
-3.329792824086 -1.3449280587109
-3.31203392912613 -1.33900842705762
-3.29427503416627 -1.33308879540433
-3.27651613920641 -1.32716916375104
-3.25875724424655 -1.32124953209775
-3.24099834928669 -1.31532990044447
-3.22323945432683 -1.30941026879118
-3.20548055936697 -1.30349063713789
-3.1877216644071 -1.2975710054846
-3.16996276944724 -1.29165137383132
-3.15220387448738 -1.28573174217803
-3.13444497952752 -1.27981211052474
-3.11668608456766 -1.27389247887146
-3.0989271896078 -1.26797284721817
-3.08116829464793 -1.26205321556488
-3.06340939968807 -1.25613358391159
-3.04565050472821 -1.25021395225831
-3.02789160976835 -1.24429432060502
-3.01013271480849 -1.23837468895173
-2.99237381984863 -1.23245505729845
-2.97461492488877 -1.22653542564516
-2.9568560299289 -1.22061579399187
-2.93909713496904 -1.21469616233858
-2.92133824000918 -1.2087765306853
-2.90357934504932 -1.20285689903201
-2.88582045008946 -1.19693726737872
-2.8680615551296 -1.19101763572544
-2.85030266016973 -1.18509800407215
-2.83254376520987 -1.17917837241886
-2.81478487025001 -1.17325874076557
-2.79702597529015 -1.16733910911229
-2.77926708033029 -1.161419477459
-2.76150818537043 -1.15549984580571
-2.74374929041057 -1.14958021415243
-2.7259903954507 -1.14366058249914
-2.70823150049084 -1.13774095084585
-2.69047260553098 -1.13182131919256
-2.67271371057112 -1.12590168753928
-2.65495481561126 -1.11998205588599
-2.6371959206514 -1.1140624242327
-2.61943702569153 -1.10814279257942
-2.60167813073167 -1.10222316092613
-2.58391923577181 -1.09630352927284
-2.56616034081195 -1.09038389761955
-2.54840144585209 -1.08446426596627
-2.53064255089223 -1.07854463431298
-2.51288365593237 -1.07262500265969
-2.4951247609725 -1.0667053710064
-2.47736586601264 -1.06078573935312
-2.45960697105278 -1.05486610769983
-2.44184807609292 -1.04894647604654
-2.42408918113306 -1.04302684439326
-2.4063302861732 -1.03710721273997
-2.38857139121333 -1.03118758108668
-2.37081249625347 -1.02526794943339
-2.35305360129361 -1.01934831778011
-2.33529470633375 -1.01342868612682
-2.31753581137389 -1.00750905447353
-2.29977691641403 -1.00158942282025
-2.28201802145417 -0.995669791166959
-2.2642591264943 -0.989750159513672
-2.24650023153444 -0.983830527860384
-2.22874133657458 -0.977910896207097
-2.21098244161472 -0.97199126455381
-2.19322354665486 -0.966071632900523
-2.175464651695 -0.960152001247236
-2.15770575673513 -0.954232369593948
-2.13994686177527 -0.948312737940661
-2.12218796681541 -0.942393106287374
-2.10442907185555 -0.936473474634087
-2.08667017689569 -0.9305538429808
-2.06891128193583 -0.924634211327513
-2.05115238697597 -0.918714579674225
-2.0333934920161 -0.912794948020938
-2.01563459705624 -0.906875316367651
-1.99787570209638 -0.900955684714364
-1.98011680713652 -0.895036053061077
-1.96235791217666 -0.889116421407789
-1.9445990172168 -0.883196789754502
-1.92684012225693 -0.877277158101215
-1.90908122729707 -0.871357526447928
-1.89132233233721 -0.865437894794641
-1.87356343737735 -0.859518263141354
-1.85580454241749 -0.853598631488066
-1.83804564745763 -0.847678999834779
-1.82028675249777 -0.841759368181492
-1.8025278575379 -0.835839736528205
-1.78476896257804 -0.829920104874918
-1.76701006761818 -0.824000473221631
};
\addlegendentry{slope $1/3$}
\addplot [line width=0.72pt, black, opacity=0.6, dotted, forget plot]
table {%
-3 -3.65114519157863
-3 0.236276915135056
};

\nextgroupplot[
legend cell align={left},
legend style={
  fill opacity=0.8,
  draw opacity=1,
  text opacity=1,
  at={(0.03,0.97)},
  anchor=north west,
  draw=none
},
tick align=outside,
tick pos=left,
title={\(\displaystyle d=15\)},
x grid style={darkgray176},
xlabel={\(\displaystyle \log_{d}p\)},
xmajorgrids,
xmin=-4.64601598228545, xmax=-1.59490098855315,
xtick style={color=black},
y grid style={darkgray176},
ylabel={},
ymajorgrids,
ymin=-2.70391307631532, ymax=0.347201917416982,
ytick style={color=black}
]
\addplot [line width=0.72pt, darkcyan0114178, mark=*, mark size=3, mark options={solid}, only marks]
table {%
-4.5073289371158 -2.56522603114567
-4.39175640071474 -2.45055412257478
-4.27618386272019 -2.33621474794936
-4.16061132963703 -2.22231499735366
-4.04503877271605 -2.1086185355013
-3.9294662355578 -1.99573932618764
-3.81389371708546 -1.88360348682445
-3.69832118341301 -1.77246216703164
-3.58274863322009 -1.66246793489373
-3.46717610197897 -1.55405068321802
-3.3516035748017 -1.44747407491583
-3.23603102792066 -1.34250634166802
-3.12045848311824 -1.24010689433383
-3.00488595223149 -1.1390668152924
-2.88931340645662 -1.04067033916162
-2.77374087180854 -0.943870774855888
-2.65816834370882 -0.850107885073805
-2.54259582364756 -0.75838877074941
-2.42702327143528 -0.668772517419401
-2.31145072713623 -0.581739773794771
-2.19587819487777 -0.495659497017167
-2.08030565612512 -0.412432118508558
-1.96473312281689 -0.332267200789399
-1.8491605805405 -0.255581512767186
-1.7335880337228 -0.198302926171486
};
%\addlegendentry{experimental values}
\addplot [thick, chocolate213940, dashed]
table {%
-4.5073289371158 -2.56522603114567
-4.49339054061634 -2.55128763464621
-4.47945214411688 -2.53734923814675
-4.46551374761741 -2.52341084164729
-4.45157535111795 -2.50947244514782
-4.43763695461849 -2.49553404864836
-4.42369855811903 -2.4815956521489
-4.40976016161956 -2.46765725564944
-4.3958217651201 -2.45371885914997
-4.38188336862064 -2.43978046265051
-4.36794497212118 -2.42584206615105
-4.35400657562171 -2.41190366965159
-4.34006817912225 -2.39796527315213
-4.32612978262279 -2.38402687665266
-4.31219138612333 -2.3700884801532
-4.29825298962387 -2.35615008365374
-4.2843145931244 -2.34221168715428
-4.27037619662494 -2.32827329065481
-4.25643780012548 -2.31433489415535
-4.24249940362602 -2.30039649765589
-4.22856100712655 -2.28645810115643
-4.21462261062709 -2.27251970465696
-4.20068421412763 -2.2585813081575
-4.18674581762817 -2.24464291165804
-4.1728074211287 -2.23070451515858
-4.15886902462924 -2.21676611865911
-4.14493062812978 -2.20282772215965
-4.13099223163032 -2.18888932566019
-4.11705383513085 -2.17495092916073
-4.10311543863139 -2.16101253266126
-4.08917704213193 -2.1470741361618
-4.07523864563247 -2.13313573966234
-4.06130024913301 -2.11919734316288
-4.04736185263354 -2.10525894666342
-4.03342345613408 -2.09132055016395
-4.01948505963462 -2.07738215366449
-4.00554666313516 -2.06344375716503
-3.99160826663569 -2.04950536066557
-3.97766987013623 -2.0355669641661
-3.96373147363677 -2.02162856766664
-3.94979307713731 -2.00769017116718
-3.93585468063784 -1.99375177466772
-3.92191628413838 -1.97981337816825
-3.90797788763892 -1.96587498166879
-3.89403949113946 -1.95193658516933
-3.88010109464 -1.93799818866987
-3.86616269814053 -1.92405979217041
-3.85222430164107 -1.91012139567094
-3.83828590514161 -1.89618299917148
-3.82434750864215 -1.88224460267202
-3.81040911214268 -1.86830620617256
-3.79647071564322 -1.85436780967309
-3.78253231914376 -1.84042941317363
-3.7685939226443 -1.82649101667417
-3.75465552614483 -1.81255262017471
-3.74071712964537 -1.79861422367524
-3.72677873314591 -1.78467582717578
-3.71284033664645 -1.77073743067632
-3.69890194014698 -1.75679903417686
-3.68496354364752 -1.74286063767739
-3.67102514714806 -1.72892224117793
-3.6570867506486 -1.71498384467847
-3.64314835414914 -1.70104544817901
-3.62920995764967 -1.68710705167955
-3.61527156115021 -1.67316865518008
-3.60133316465075 -1.65923025868062
-3.58739476815129 -1.64529186218116
-3.57345637165182 -1.6313534656817
-3.55951797515236 -1.61741506918223
-3.5455795786529 -1.60347667268277
-3.53164118215344 -1.58953827618331
-3.51770278565397 -1.57559987968385
-3.50376438915451 -1.56166148318438
-3.48982599265505 -1.54772308668492
-3.47588759615559 -1.53378469018546
-3.46194919965613 -1.519846293686
-3.44801080315666 -1.50590789718654
-3.4340724066572 -1.49196950068707
-3.42013401015774 -1.47803110418761
-3.40619561365828 -1.46409270768815
-3.39225721715881 -1.45015431118869
-3.37831882065935 -1.43621591468922
-3.36438042415989 -1.42227751818976
-3.35044202766043 -1.4083391216903
-3.33650363116096 -1.39440072519084
-3.3225652346615 -1.38046232869137
-3.30862683816204 -1.36652393219191
-3.29468844166258 -1.35258553569245
-3.28075004516311 -1.33864713919299
-3.26681164866365 -1.32470874269352
-3.25287325216419 -1.31077034619406
-3.23893485566473 -1.2968319496946
-3.22499645916527 -1.28289355319514
-3.2110580626658 -1.26895515669568
-3.19711966616634 -1.25501676019621
-3.18318126966688 -1.24107836369675
-3.16924287316742 -1.22713996719729
-3.15530447666795 -1.21320157069783
-3.14136608016849 -1.19926317419836
-3.12742768366903 -1.1853247776989
-3.11348928716957 -1.17138638119944
-3.0995508906701 -1.15744798469998
-3.08561249417064 -1.14350958820051
-3.07167409767118 -1.12957119170105
-3.05773570117172 -1.11563279520159
-3.04379730467226 -1.10169439870213
-3.02985890817279 -1.08775600220267
-3.01592051167333 -1.0738176057032
-3.00198211517387 -1.05987920920374
-2.98804371867441 -1.04594081270428
-2.97410532217494 -1.03200241620482
-2.96016692567548 -1.01806401970535
-2.94622852917602 -1.00412562320589
-2.93229013267656 -0.990187226706429
-2.91835173617709 -0.976248830206966
-2.90441333967763 -0.962310433707504
-2.89047494317817 -0.948372037208042
-2.87653654667871 -0.93443364070858
-2.86259815017924 -0.920495244209117
-2.84865975367978 -0.906556847709655
-2.83472135718032 -0.892618451210192
-2.82078296068086 -0.87868005471073
-2.8068445641814 -0.864741658211268
-2.79290616768193 -0.850803261711806
-2.77896777118247 -0.836864865212343
-2.76502937468301 -0.822926468712881
-2.75109097818355 -0.808988072213419
-2.73715258168408 -0.795049675713956
-2.72321418518462 -0.781111279214494
-2.70927578868516 -0.767172882715031
-2.6953373921857 -0.753234486215569
-2.68139899568623 -0.739296089716107
-2.66746059918677 -0.725357693216645
-2.65352220268731 -0.711419296717182
-2.63958380618785 -0.69748090021772
-2.62564540968839 -0.683542503718257
-2.61170701318892 -0.669604107218795
-2.59776861668946 -0.655665710719333
-2.58383022019 -0.641727314219871
-2.56989182369054 -0.627788917720408
-2.55595342719107 -0.613850521220946
-2.54201503069161 -0.599912124721484
-2.52807663419215 -0.585973728222021
-2.51413823769269 -0.572035331722559
-2.50019984119322 -0.558096935223097
-2.48626144469376 -0.544158538723634
-2.4723230481943 -0.530220142224172
-2.45838465169484 -0.516281745724709
-2.44444625519537 -0.502343349225247
-2.43050785869591 -0.488404952725785
-2.41656946219645 -0.474466556226322
-2.40263106569699 -0.46052815972686
-2.38869266919753 -0.446589763227398
-2.37475427269806 -0.432651366727935
-2.3608158761986 -0.418712970228473
-2.34687747969914 -0.404774573729011
-2.33293908319968 -0.390836177229549
-2.31900068670021 -0.376897780730086
-2.30506229020075 -0.362959384230624
-2.29112389370129 -0.349020987731162
-2.27718549720183 -0.335082591231699
-2.26324710070236 -0.321144194732237
-2.2493087042029 -0.307205798232774
-2.23537030770344 -0.293267401733312
-2.22143191120398 -0.27932900523385
-2.20749351470451 -0.265390608734387
-2.19355511820505 -0.251452212234925
-2.17961672170559 -0.237513815735463
-2.16567832520613 -0.223575419236
-2.15173992870667 -0.209637022736538
-2.1378015322072 -0.195698626237076
-2.12386313570774 -0.181760229737614
-2.10992473920828 -0.167821833238151
-2.09598634270882 -0.153883436738689
-2.08204794620935 -0.139945040239227
-2.06810954970989 -0.126006643739764
-2.05417115321043 -0.112068247240302
-2.04023275671097 -0.0981298507408392
-2.0262943602115 -0.0841914542413771
-2.01235596371204 -0.0702530577419149
-1.99841756721258 -0.0563146612424523
-1.98447917071312 -0.0423762647429902
-1.97054077421366 -0.0284378682435276
-1.95660237771419 -0.0144994717440654
-1.94266398121473 -0.000561075244603249
-1.92872558471527 0.0133773212548594
-1.91478718821581 0.0273157177543215
-1.90084879171634 0.0412541142537841
-1.88691039521688 0.0551925107532463
-1.87297199871742 0.0691309072527084
-1.85903360221796 0.083069303752171
-1.84509520571849 0.0970077002516332
-1.83115680921903 0.110946096751095
-1.81721841271957 0.124884493250558
-1.80328001622011 0.13882288975002
-1.78934161972064 0.152761286249483
-1.77540322322118 0.166699682748945
-1.76146482672172 0.180638079248407
-1.74752643022226 0.19457647574787
-1.7335880337228 0.208514872247332
};
%\addlegendentry{slope $1$}
\addplot [thick, darkcyan0158115, dashed]
table {%
-4.5073289371158 -1.63988114358717
-4.49339054061634 -1.63523501142068
-4.47945214411688 -1.63058887925419
-4.46551374761741 -1.6259427470877
-4.45157535111795 -1.62129661492122
-4.43763695461849 -1.61665048275473
-4.42369855811903 -1.61200435058824
-4.40976016161956 -1.60735821842175
-4.3958217651201 -1.60271208625527
-4.38188336862064 -1.59806595408878
-4.36794497212118 -1.59341982192229
-4.35400657562171 -1.5887736897558
-4.34006817912225 -1.58412755758932
-4.32612978262279 -1.57948142542283
-4.31219138612333 -1.57483529325634
-4.29825298962387 -1.57018916108985
-4.2843145931244 -1.56554302892337
-4.27037619662494 -1.56089689675688
-4.25643780012548 -1.55625076459039
-4.24249940362602 -1.5516046324239
-4.22856100712655 -1.54695850025742
-4.21462261062709 -1.54231236809093
-4.20068421412763 -1.53766623592444
-4.18674581762817 -1.53302010375795
-4.1728074211287 -1.52837397159147
-4.15886902462924 -1.52372783942498
-4.14493062812978 -1.51908170725849
-4.13099223163032 -1.514435575092
-4.11705383513085 -1.50978944292552
-4.10311543863139 -1.50514331075903
-4.08917704213193 -1.50049717859254
-4.07523864563247 -1.49585104642606
-4.06130024913301 -1.49120491425957
-4.04736185263354 -1.48655878209308
-4.03342345613408 -1.48191264992659
-4.01948505963462 -1.47726651776011
-4.00554666313516 -1.47262038559362
-3.99160826663569 -1.46797425342713
-3.97766987013623 -1.46332812126064
-3.96373147363677 -1.45868198909416
-3.94979307713731 -1.45403585692767
-3.93585468063784 -1.44938972476118
-3.92191628413838 -1.44474359259469
-3.90797788763892 -1.44009746042821
-3.89403949113946 -1.43545132826172
-3.88010109464 -1.43080519609523
-3.86616269814053 -1.42615906392874
-3.85222430164107 -1.42151293176226
-3.83828590514161 -1.41686679959577
-3.82434750864215 -1.41222066742928
-3.81040911214268 -1.40757453526279
-3.79647071564322 -1.40292840309631
-3.78253231914376 -1.39828227092982
-3.7685939226443 -1.39363613876333
-3.75465552614483 -1.38899000659684
-3.74071712964537 -1.38434387443036
-3.72677873314591 -1.37969774226387
-3.71284033664645 -1.37505161009738
-3.69890194014698 -1.37040547793089
-3.68496354364752 -1.36575934576441
-3.67102514714806 -1.36111321359792
-3.6570867506486 -1.35646708143143
-3.64314835414914 -1.35182094926494
-3.62920995764967 -1.34717481709846
-3.61527156115021 -1.34252868493197
-3.60133316465075 -1.33788255276548
-3.58739476815129 -1.33323642059899
-3.57345637165182 -1.32859028843251
-3.55951797515236 -1.32394415626602
-3.5455795786529 -1.31929802409953
-3.53164118215344 -1.31465189193304
-3.51770278565397 -1.31000575976656
-3.50376438915451 -1.30535962760007
-3.48982599265505 -1.30071349543358
-3.47588759615559 -1.29606736326709
-3.46194919965613 -1.29142123110061
-3.44801080315666 -1.28677509893412
-3.4340724066572 -1.28212896676763
-3.42013401015774 -1.27748283460115
-3.40619561365828 -1.27283670243466
-3.39225721715881 -1.26819057026817
-3.37831882065935 -1.26354443810168
-3.36438042415989 -1.2588983059352
-3.35044202766043 -1.25425217376871
-3.33650363116096 -1.24960604160222
-3.3225652346615 -1.24495990943573
-3.30862683816204 -1.24031377726925
-3.29468844166258 -1.23566764510276
-3.28075004516311 -1.23102151293627
-3.26681164866365 -1.22637538076978
-3.25287325216419 -1.2217292486033
-3.23893485566473 -1.21708311643681
-3.22499645916527 -1.21243698427032
-3.2110580626658 -1.20779085210383
-3.19711966616634 -1.20314471993735
-3.18318126966688 -1.19849858777086
-3.16924287316742 -1.19385245560437
-3.15530447666795 -1.18920632343788
-3.14136608016849 -1.1845601912714
-3.12742768366903 -1.17991405910491
-3.11348928716957 -1.17526792693842
-3.0995508906701 -1.17062179477193
-3.08561249417064 -1.16597566260545
-3.07167409767118 -1.16132953043896
-3.05773570117172 -1.15668339827247
-3.04379730467226 -1.15203726610598
-3.02985890817279 -1.1473911339395
-3.01592051167333 -1.14274500177301
-3.00198211517387 -1.13809886960652
-2.98804371867441 -1.13345273744003
-2.97410532217494 -1.12880660527355
-2.96016692567548 -1.12416047310706
-2.94622852917602 -1.11951434094057
-2.93229013267656 -1.11486820877408
-2.91835173617709 -1.1102220766076
-2.90441333967763 -1.10557594444111
-2.89047494317817 -1.10092981227462
-2.87653654667871 -1.09628368010813
-2.86259815017924 -1.09163754794165
-2.84865975367978 -1.08699141577516
-2.83472135718032 -1.08234528360867
-2.82078296068086 -1.07769915144218
-2.8068445641814 -1.0730530192757
-2.79290616768193 -1.06840688710921
-2.77896777118247 -1.06376075494272
-2.76502937468301 -1.05911462277624
-2.75109097818355 -1.05446849060975
-2.73715258168408 -1.04982235844326
-2.72321418518462 -1.04517622627677
-2.70927578868516 -1.04053009411029
-2.6953373921857 -1.0358839619438
-2.68139899568623 -1.03123782977731
-2.66746059918677 -1.02659169761082
-2.65352220268731 -1.02194556544434
-2.63958380618785 -1.01729943327785
-2.62564540968839 -1.01265330111136
-2.61170701318892 -1.00800716894487
-2.59776861668946 -1.00336103677839
-2.58383022019 -0.998714904611898
-2.56989182369054 -0.994068772445411
-2.55595342719107 -0.989422640278924
-2.54201503069161 -0.984776508112436
-2.52807663419215 -0.980130375945949
-2.51413823769269 -0.975484243779461
-2.50019984119322 -0.970838111612974
-2.48626144469376 -0.966191979446486
-2.4723230481943 -0.961545847279999
-2.45838465169484 -0.956899715113511
-2.44444625519537 -0.952253582947024
-2.43050785869591 -0.947607450780537
-2.41656946219645 -0.942961318614049
-2.40263106569699 -0.938315186447562
-2.38869266919753 -0.933669054281074
-2.37475427269806 -0.929022922114587
-2.3608158761986 -0.924376789948099
-2.34687747969914 -0.919730657781612
-2.33293908319968 -0.915084525615125
-2.31900068670021 -0.910438393448637
-2.30506229020075 -0.90579226128215
-2.29112389370129 -0.901146129115662
-2.27718549720183 -0.896499996949175
-2.26324710070236 -0.891853864782687
-2.2493087042029 -0.8872077326162
-2.23537030770344 -0.882561600449712
-2.22143191120398 -0.877915468283225
-2.20749351470451 -0.873269336116737
-2.19355511820505 -0.86862320395025
-2.17961672170559 -0.863977071783762
-2.16567832520613 -0.859330939617275
-2.15173992870667 -0.854684807450788
-2.1378015322072 -0.8500386752843
-2.12386313570774 -0.845392543117813
-2.10992473920828 -0.840746410951325
-2.09598634270882 -0.836100278784838
-2.08204794620935 -0.83145414661835
-2.06810954970989 -0.826808014451863
-2.05417115321043 -0.822161882285376
-2.04023275671097 -0.817515750118888
-2.0262943602115 -0.812869617952401
-2.01235596371204 -0.808223485785913
-1.99841756721258 -0.803577353619426
-1.98447917071312 -0.798931221452938
-1.97054077421366 -0.794285089286451
-1.95660237771419 -0.789638957119963
-1.94266398121473 -0.784992824953476
-1.92872558471527 -0.780346692786989
-1.91478718821581 -0.775700560620501
-1.90084879171634 -0.771054428454014
-1.88691039521688 -0.766408296287526
-1.87297199871742 -0.761762164121039
-1.85903360221796 -0.757116031954551
-1.84509520571849 -0.752469899788064
-1.83115680921903 -0.747823767621577
-1.81721841271957 -0.743177635455089
-1.80328001622011 -0.738531503288602
-1.78934161972064 -0.733885371122114
-1.77540322322118 -0.729239238955627
-1.76146482672172 -0.724593106789139
-1.74752643022226 -0.719946974622652
-1.7335880337228 -0.715300842456164
};
%addlegendentry{slope $1/3$}
\addplot [line width=0.72pt, black, opacity=0.6, dotted, forget plot]
table {%
-3 -2.70391307631532
-3 0.347201917416982
};
\end{groupplot}

\end{tikzpicture}